\title{\LARGE \bf
Reachability-based Trajectory Design with Neural Implicit Safety Constraints}
\author{Jonathan Michaux$^{1}$, Qingyi Chen$^{2}$, Yongseok Kwon$^{3}$ and Ram Vasudevan$^{1,3}$
\thanks{This work is supported by the Ford Motor Company via the Ford-UM
Alliance under award N022977, National Science Foundation Career Award \#1751093 and by the Office of Naval Research under Award Number N00014-18-1-2575 }
\thanks{$^{1}$Robotics Institute, University of Michigan, Ann Arbor, MI $\langle$\texttt{jmichaux,,ramv, chenqy}$\rangle$\texttt{@umich.edu}.}
\thanks{$^{2}$Computer Science, University of Michigan, Ann Arbor, MI \texttt{chenqy@umich.edu.edu}.}
\thanks{$^{3}$Mechanical Engineering, University of Michigan, Ann Arbor, MI $\langle$\texttt{ramv, kwonys}$\rangle$\texttt{@umich.edu}.}
}
\begin{document}

\newif\ifcommentson
\commentsontrue
\newcounter{PatrickCount}
\addtocounter{PatrickCount}{1}
\newcommand{\pat}[1]{\textcolor{OliveGreen}{\ifcommentson\textbf{(\thePatrickCount) Patrick}: (#1)\fi}\addtocounter{PatrickCount}{1}}

\newcounter{JonCount}
\addtocounter{JonCount}{1}
\newcommand{\jon}[1]{\textcolor{RoyalBlue}{\ifcommentson\textbf{(\theJonCount) Jon}: (#1)\fi}\addtocounter{JonCount}{1}}

\newcounter{YongCount}
\addtocounter{YongCount}{1}
\newcommand{\yong}[1]{\textcolor{OliveGreen}{\ifcommentson\textbf{(\theYongCount) Yong}: (#1)\fi}\addtocounter{YongCount}{1}}

\newcounter{QingyiCount}
\addtocounter{QingyiCount}{1}
\newcommand{\qc}[1]{\textcolor{YellowOrange}{\ifcommentson\textbf{(\theQingyiCount) Qingyi}: (#1)\fi}\addtocounter{QingyiCount}{1}}

\newcounter{NotesCount}
\addtocounter{NotesCount}{1}
\newcommand{\notes}[1]{\textcolor{Blue}{\ifcommentson\textbf{(\theNotesCount) Notes}: (#1)\fi}\addtocounter{NotesCount}{1}}

\newcounter{ShreyCount}
\addtocounter{ShreyCount}{1}
\newcommand{\shrey}[1]{\textcolor{RoyalBlue}{\ifcommentson\textbf{(\theShreyCount) Shrey}: (#1)\fi}\addtocounter{ShreyCount}{1}}

\newcounter{RamCount}
\addtocounter{RamCount}{1}
\newcommand{\Ram}[1]{\textcolor{WildStrawberry}{\ifcommentson\textbf{(\theRamCount) Ram}: (#1)\fi}\addtocounter{RamCount}{1}}

\newcounter{FixCount}
\addtocounter{FixCount}{1}
\newcommand{\fix}[1]{\textcolor{Purple}{\ifcommentson\textbf{(\theFixCount) FIX}: (#1)\fi}\addtocounter{FixCount}{1}}

\newcommand{\setop}[1]{{\mathrm{\textnormal{\texttt{#1}}}}}
\newcommand{\numop}[1]{{\mathrm{\textnormal{\texttt{#1}}}}}

\newcommand{\homtrans}{H}
\newcommand{\co}{c_O}
\newcommand{\Go}{G_O}

\newcommand{\cf}{c_F}
\newcommand{\Gf}{G_F}
\newtheorem{defn}{Definition}
\newtheorem{rem}[defn]{Remark}
\newtheorem{lem}[defn]{Lemma}
\newtheorem{prop}[defn]{Proposition}
\newtheorem{assum}[defn]{Assumption}
\newtheorem{ex}[defn]{Example}
\newtheorem{thm}[defn]{Theorem}
\newtheorem{cor}[defn]{Corollary}
\newtheorem{con}[defn]{Conjecture}
\newtheorem{problem}[defn]{Problem}

\providecommand{\R}{\ensuremath \mathbb{R}}
\providecommand{\IR}{\ensuremath \mathbb{IR}}
\providecommand{\N}{\ensuremath \mathbb{N}}
\providecommand{\Q}{\ensuremath \mathbb{Q}}

\newcommand{\regtext}[1]{\mathrm{\textnormal{#1}}}
\newcommand{\ol}[1]{\overline{#1}}
\newcommand{\ul}[1]{\underline{#1}}
\newcommand{\defemph}[1]{\emph{#1}}
\newcommand{\ts}[1]{\textsuperscript{#1}}

\newcommand{\comp}{^{\regtext{C}}}
\newcommand{\card}[1]{\left\vert#1\right\vert}
\newcommand{\proj}{\regtext{proj}}
\newcommand{\norm}[1]{\left\Vert#1\right\Vert}
\newcommand{\abs}[1]{\left\vert#1\right\vert}
\newcommand{\pow}[1]{\mathcal{P}\!\left(#1\right)}
\newcommand{\diag}[1]{\regtext{diag}\!\left(#1\right)}
\newcommand{\eig}[1]{\regtext{eig}\!\left(#1\right)}
\newcommand{\union}{\bigcup}
\newcommand{\intersection}{\bigcap}
\newcommand{\trans}{^\top}
\newcommand{\inv}{^{-1}}
\newcommand{\pinv}{^{\dagger}}
\newcommand{\sign}{\regtext{sign}}
\newcommand{\expm}{\regtext{exp}}
\newcommand{\logm}{\regtext{log}}
\newcommand{\skw}{_{\times}}
\newcommand{\bigO}{\mathcal{O}}
\newcommand{\bdry}[1]{\regtext{bd}\!\left(#1\right)}
\renewcommand{\ker}[1]{\regtext{ker}\!\left(#1\right)}
\newcommand{\conv}{\texttt{conv}}

\newcommand{\lbl}[1]{_{\regtext{#1}}}
\newcommand{\lo}{\lbl{lo}}
\newcommand{\hi}{\lbl{hi}}

\newcommand{\emptyarr}{[\ ]}
\newcommand{\zeros}{{0}}
\newcommand{\ones}{{1}}
\newcommand{\eye}{\regtext{I}}

\newcommand{\interval}[1]{[ #1 ]}
\newcommand{\iv}[1]{[ #1 ]}
\newcommand{\nom}[1]{#1}
\newcommand{\pz}[1]{\mathbf{#1}}
\newcommand{\pzgreek}[1]{\bm{#1}}
\newcommand{\PZ}[1]{\mathcal{PZ}\left(#1\right)}

\newcommand{\dist}{d}
\newcommand{\sdf}{s}
\newcommand{\rdf}{r}
\newcommand{\pdf}{\pi}
\newcommand{\ardf}{\Tilde{r}}
\newcommand{\asdf}{\Tilde{s}}
\newcommand{\rdfnn}{\Tilde{r}_{NN|\theta}}
\newcommand{\QP}{Alg. \ref{alg:rdf}\xspace}

\newcommand{\Aobs}{A_O}
\newcommand{\bobs}{b_O}
\newcommand{\hobs}{h_\regtext{obs}}

\newcommand{\obsset}{\mathscr{O}}


\newcommand{\pzk}[1]{\pz{ #1 ; k }}
\newcommand{\pzi}[1]{\pz{ #1 }(\pz{T_i};\pz{K})}
\newcommand{\pzki}[1]{\pz{ #1 }(\pz{T_i};k)}
\newcommand{\pzjki}[1]{\pz{ #1 }_j (\pz{T_i};k )}
\newcommand{\pzjKi}[1]{\pz{ #1 }_j (\pz{T_i};K )}

\newcommand{\ith}{$i$\ts{th}}
\newcommand{\jth}{$j$\ts{th}}

\newcommand{\pzqi}{\pzi{q}}
\newcommand{\pzqli}{\pzi{q_l}}
\newcommand{\pzqdi}{\pzi{\dot{q}}}
\newcommand{\pzqdai}{\pzi{\dot{q}_{a}}}
\newcommand{\pzqddi}{\pzi{\ddot{q}}}
\newcommand{\pzqddai}{\pzi{\ddot{q}_{a}}}
\newcommand{\pzqdesi}{\pzi{q_{d}}}
\newcommand{\pzqddesi}{\pzi{\dot{q}_{d}}}
\newcommand{\pzqdddesi}{\pzi{\ddot{q}_{d}}}
\newcommand{\pzqdeski}{\pzki{q_{d}}}
\newcommand{\pzqddeski}{\pzki{\dot{q}_{d}}}
\newcommand{\pzqdddeski}{\pzki{\ddot{q}_{d}}}
\newcommand{\pzui}{\pzki{u}}
\newcommand{\pzqki}{\pzki{q}}
\newcommand{\pzqdki}{\pzki{\dot{q}}}
\newcommand{\pzuki}{\pzki{u}}

\newcommand{\pzqji}{\pzi{q_j}}
\newcommand{\pzqdji}{\pzi{\dot{q}_j}}
\newcommand{\pzqdaji}{\pzi{\dot{q}_{a,j}}}
\newcommand{\pzqddji}{\pzi{\ddot{q}_j}}
\newcommand{\pzqddaji}{\pzi{\ddot{q}_{a,j}}}
\newcommand{\pzqdesji}{\pzi{q_{d,j}}}
\newcommand{\pzqddesji}{\pzi{\dot{q}_{d,j}}}
\newcommand{\pzqdddesji}{\pzi{\ddot{q}_{d,j}}}
\newcommand{\pzqdesjki}{\pzki{q_{d,j}}}
\newcommand{\pzqddesjki}{\pzki{\dot{q}_{d,j}}}
\newcommand{\pzqdddesjki}{\pzki{\ddot{q}_{d,j}}}
\newcommand{\pzuji}{\pzki{u_j}}
\newcommand{\pzqjki}{\pzki{q_j}}
\newcommand{\pzqdjki}{\pzki{\dot{q}_j}}
\newcommand{\pzujki}{\pzki{u_j}}

\newcommand{\pzujKi}{\pz{u}(\pzqAi, \nomparams, \intparams)}
\newcommand{\pzFKjki}{\pz{FK_j}(\pzqki)}
\newcommand{\pzFOjki}{\pz{FO_j}(\pzqki)}
\newcommand{\pzFOki}{\pz{FO}(\pzqki)}
\newcommand{\pzFOjkibuf}{\pz{FO_{j}^{buf}}(\pzqki)}
\newcommand{\pzFKjKi}{\pz{FK_j}(\pzqi)}
\newcommand{\pzFOjKi}{\pz{FO_j}(\pzqi)}
\newcommand{\Pj}{\mathcal{P}_j}
\newcommand{\Hjh}{\mathcal{H}_j^{(h)}}
\newcommand{\Ajh}{A_j^{(h)}}
\newcommand{\bjh}{b_j^{(h)}}

\newcommand{\pzg}{g}
\newcommand{\pzv}{x}
\newcommand{\pze}{\alpha}
\newcommand{\pzn}{{n_g}}
\newcommand{\pzgi}{g_i}
\newcommand{\pzei}{\alpha_i}

\newcommand{\tvar}{x_t}
\newcommand{\tvari}{x_{t_{i}}}

\newcommand{\q}{q(t)}
\newcommand{\qd}{\dot{q}(t)}
\newcommand{\qdd}{\ddot{q}(t)}
\newcommand{\qa}{q_a(t)}
\newcommand{\qadot}{\dot{q}_a(t)}
\newcommand{\qaddot}{\ddot{q}_a(t)}
\newcommand{\qak}{q_a(t; k)}
\newcommand{\qakdot}{\dot{q}_a(t; k)}
\newcommand{\qakddot}{\ddot{q}_a(t; k)}
\newcommand{\qdes}{q_d(t)}
\newcommand{\qdesdot}{\dot{q}_d(t)}
\newcommand{\qdesddot}{\ddot{q}_d(t)}
\newcommand{\qdesk}{q_d(t; k)}
\newcommand{\qdeskdot}{\dot{q}_d(t; k)}
\newcommand{\qdeskddot}{\ddot{q}_d(t; k)}


\newcommand{\qj}{q_j(t)}
\newcommand{\ql}{q_l(t)}
\newcommand{\qdj}{\dot{q}_{j}(t)}
\newcommand{\qddj}{\ddot{q}_{j}(t)}
\newcommand{\qkj}{q_j(t; k)}
\newcommand{\qdkj}{\dot{q}_{j}(t; k)}
\newcommand{\qddkj}{\ddot{q}_{j}(t; k)}
\newcommand{\qaj}{q_{a, j}(t)}
\newcommand{\qajdot}{\dot{q}_{a, j}(t)}
\newcommand{\qajddot}{\ddot{q}_{a, j}(t)}
\newcommand{\qdesj}{q_{d, j}(t)}
\newcommand{\qdesjdot}{\dot{q}_{d, j}(t)}
\newcommand{\qdesjddot}{\ddot{q}_{d, j}(t)}
\newcommand{\qdeskj}{q_{d, j}(t; k)}
\newcommand{\qdeskjdot}{\dot{q}_{d, j}(t; k)}
\newcommand{\qdeskjddot}{\ddot{q}_{d, j}(t; k)}

\newcommand{\nd}{n_d}
\newcommand{\nq}{n_q}
\newcommand{\nt}{n_t}
\newcommand{\nf}{n_f}
\newcommand{\nObs}{n_\mathscr{O}}
\newcommand{\nhj}{n_{h,j}}
\newcommand{\Nq}{ N_q }
\newcommand{\Nt}{ N_t }
\newcommand{\Nhj}{ N_{h,j} }
\newcommand{\NObs}{ N_{\mathscr{O}} }

\newcommand{\err}{e}
\newcommand{\errdot}{\dot{e}}
\newcommand{\errj}{e_j}
\newcommand{\errjdot}{\dot{e}_j}
\newcommand{\errddot}{\ddot{e}}
\newcommand{\robv}{v}
\newcommand{\robr}{r}
\newcommand{\robw}{w}
\newcommand{\robrdot}{\dot{r}}
\newcommand{\roblyap}{V(q, r)}
\newcommand{\roblyapmax}{\overline{V}(q, r)}
\newcommand{\roblyapdot}{\dot{V}(q, r))}
\newcommand{\robh}{h(q, r)}
\newcommand{\robH}{H}
\newcommand{\robhmin}{\underline{h}(q, r)}
\newcommand{\robhdot}{\dot{h}(q, r)}
\newcommand{\roblevel}{V_M}
\newcommand{\robcoeff}{\gamma}
\newcommand{\robKinf}{\alpha}
\newcommand{\robgain}{\alpha_c}
\newcommand{\ultbound}{\sqrt{\frac{2 \roblevel}{\sigma_m}}}
\newcommand{\pbound}{\epsilon_p}
\newcommand{\vbound}{\epsilon_v}
\newcommand{\pboundj}{\epsilon_{p, j}}
\newcommand{\vboundj}{\epsilon_{v, j}}
\newcommand{\pboundvec}{E_p}
\newcommand{\vboundvec}{E_v}
\newcommand{\epvar}{x_{e_p}}
\newcommand{\evvar}{x_{e_v}}
\newcommand{\epvarj}{x_{e_{p, j}}}
\newcommand{\evvarj}{x_{e_{v, j}}}

\providecommand{\R}{\ensuremath \mathbb{R}}
\newcommand{\plan}{_p}
\newcommand{\prev}{_\regtext{prev}}
\providecommand{\tfin}{t_\regtext{f}}

\newcommand{\zi}{z_i}
\newcommand{\zj}{z_j}
\newcommand{\rbf}{\mathbf{r}(t)}

\newcommand{\bM}{M}
\newcommand{\Mq}{M(\q, \Delta)}
\newcommand{\Mqdot}{\dot{M}(q, \Delta)}
\newcommand{\bMt}{\Tilde{M}}

\newcommand{\bC}{C}
\newcommand{\Cq}{C(\q, \qd)}
\newcommand{\Cqd}{C(\q, \qd, \Delta)}
\newcommand{\bCt}{\Tilde{C}}

\newcommand{\bG}{G}
\newcommand{\Gq}{G(\q)}
\newcommand{\Gqd}{G(\q, \Delta)}
\newcommand{\bGt}{\Tilde{G}}

\newcommand{\bfc}{\mathbf{c}}
\newcommand{\bfI}{\mathbf{I}}

\newcommand{\intparams}{[\Delta]}
\newcommand{\nomparams}{\Delta_0}
\newcommand{\trueparams}{\Delta}

\makeatletter
\newcommand{\smalloplus}{\mathbin{\mathpalette\make@small\oplus}}
\newcommand{\smallotimes}{\mathbin{\mathpalette\make@small\otimes}}

\newcommand{\Hquad}{\hspace{0.5em}} 

\newcommand{\RDF}{\regtext{\small{RDF}}}
\newcommand{\FK}{\regtext{\small{FK}}}
\newcommand{\IK}{\regtext{\small{IK}}}
\newcommand{\FO}{\regtext{\small{FO}}}
\newcommand{\IO}{\regtext{\small{IO}}}
\newcommand{\FS}{\regtext{\small{FS}}}
\newcommand{\IS}{\regtext{\small{IS}}}
\newcommand{\FC}{\regtext{\small{FC}}}
\newcommand{\IC}{\regtext{\small{IC}}}
\newcommand{\ID}{\regtext{\small{ID}}}

\newcommand{\exact}{^{\regtext{exact}}}
\newcommand{\slice}{\textnormal{\texttt{slice}}}
\newcommand{\eval}{\textnormal{\texttt{eval}}}
\newcommand{\stack}{\textnormal{\texttt{stack}}}
\newcommand{\reduce}{\textnormal{\texttt{reduce}}}
\newcommand{\getCoeffValue}{\texttt{getCoeffValue}}
\newcommand{\timeint}{([0, T])}

\newcommand{\SO}{\regtext{\small{SO}}}

\newcommand{\kj}{k_j}
\newcommand{\Kj}{K_j}

\newcommand{\kscale}{\eta_1}
\newcommand{\kjscale}{\eta_{j, 1}}
\newcommand{\koffset}{\eta_2}
\newcommand{\kjoffset}{\eta_{j, 2}}
\newcommand{\kvar}{x_k}
\newcommand{\kjvar}{x_{k_j}}

\newcommand{\F}{\mathcal{F}}

\newcommand{\eh}{\hat{e}}

\newcommand{\tsum}{{\textstyle\sum}}

\newcommand{\ujt}{u_j(t)}


\newcommand{\pjt}{p_j(t)}
\newcommand{\Rjt}{R_j(t)}

\newcommand{\unsafeobs}{_\regtext{obs}}
\newcommand{\unsafeself}{_\regtext{self}}
\newcommand{\unsafejoint}{_\regtext{lim}}
\newcommand{\jlim}{_\regtext{lim}}
\newcommand{\selfidx}{I\self}

\newcommand{\qlim}{q_{j,\regtext{lim}}}
\newcommand{\dqlim}{\dot{q}_{j,\regtext{lim}}}
\newcommand{\ddqlim}{\ddot{q}_{j,\regtext{lim}}}
\newcommand{\ulim}{u_{j,\regtext{lim}}}

\newcommand{\hitj}{h_i^{t, j}}
\newcommand{\buf}{_\regtext{buf}}
\newcommand{\slc}{_\regtext{slc}}
\newcommand{\Aitj}{A_i^{t, j}}
\newcommand{\bitj}{b_i^{t, j}}
\newcommand{\hitself}{h_{i_1, i_2}^{t}}
\newcommand{\Aitself}{A_{i_1, i_2}^{t}}
\newcommand{\bitself}{b_{i_1, i_2}^{t}}

\newcommand{\hqim}{h_{q_i^-}}
\newcommand{\hqip}{h_{q_i^+}}
\newcommand{\hdqim}{h_{\dot{q}_i^-}}
\newcommand{\hdqip}{h_{\dot{q}_i^+}}
\newcommand{\hijoint}{h_{i, \regtext{lim}}}

\newcommand{\initq}{q_{d_0}}
\newcommand{\initqj}{q_{d, j_{0}}}
\newcommand{\initdq}{\dot{q}_{d_0}}
\newcommand{\initdqj}{\dot{q}_{d, j_{0}}}
\newcommand{\initddq}{\ddot{q}_{d_0}}
\newcommand{\initddqj}{\ddot{q}_{d, j_{0}}}

\newcommand{\costfunc}{\phi}

\newcommand{\iss}{_{i}^{i}}
\newcommand{\issm}{_{i-1}^{i}}
\newcommand{\issmu}{_{i}^{i-1}}
\newcommand{\issmi}{_{i-1, i}^{i}}
\newcommand{\issp}{_{i+1}^{i}}
\newcommand{\issmm}{_{i-1}^{i-1}}
\newcommand{\isspp}{_{i+1}^{i+1}}
\newcommand{\issa}{_{a, i}^{i}}
\newcommand{\issc}{_{c, i}^{i}}
\newcommand{\issma}{_{a, i-1}^{i}}
\newcommand{\isspa}{_{a, i+1}^{i}}
\newcommand{\issmma}{_{a, i-1}^{i-1}}
\newcommand{\issppa}{_{a, i+1}^{i+1}}

\newcommand{\jss}{_{j}^{j}}
\newcommand{\jssm}{_{j-1}^{j}}
\newcommand{\jssmu}{_{j}^{j-1}}
\newcommand{\jssmj}{_{j-1, j}^{j}}
\newcommand{\jssp}{_{j+1}^{j}}
\newcommand{\jssmm}{_{j-1}^{j-1}}
\newcommand{\jsspp}{_{j+1}^{j+1}}
\newcommand{\jssa}{_{a, j}^{j}}
\newcommand{\jssc}{_{c, j}^{j}}
\newcommand{\jssma}{_{a, j-1}^{j}}
\newcommand{\jsspa}{_{a, j+1}^{j}}
\newcommand{\jssmma}{_{a, j-1}^{j-1}}
\newcommand{\jssppa}{_{a, j+1}^{j+1}}

\newcommand{\lssmu}{_{l}^{l-1}}

\newcommand{\qgoal}{q_\regtext{goal}}
\newcommand{\qstart}{q_\regtext{start}}
\newcommand{\methodname}{{RDF}\xspace}

\newcommand{\timestep}{\Delta t}

\maketitle
\thispagestyle{plain}
\pagestyle{plain}

\begin{abstract} 
Generating safe motion plans in real-time is a key requirement for deploying robot manipulators to assist humans in collaborative settings.
In particular, robots must satisfy strict safety requirements to avoid self-damage or harming nearby humans.
Satisfying these requirements is particularly challenging if the robot must also operate in real-time to adjust to changes in its environment.
This paper addresses these challenges by proposing Reachability-based Signed Distance Functions (\methodname{}s) as a neural implicit representation for robot safety.
\methodname{}, which can be constructed using supervised learning in a tractable fashion, accurately predicts the distance between the swept volume of a robot arm and an obstacle.
\methodname{}'s inference and gradient computations are fast and scale linearly with the dimension of the system; these features enable its use within a novel real-time trajectory planning framework as a continuous-time collision-avoidance constraint.
The planning method using \methodname{} is compared to a variety of state-of-the-art techniques and is demonstrated to successfully solve challenging motion planning tasks for high-dimensional systems faster and more reliably than all tested methods.
\end{abstract}

\section{Introduction}
\label{sec:intro}

Robotic manipulators will one day assist humans in a variety of collaborative tasks such as mining, farming, and surgery.
However, to ensure that they operate robustly in human-centric environments, they must satisfy several important criteria.
First, robots should be autonomous and capable of making their own decisions about how to accomplish specific tasks. 
Second, robots should be safe and only perform actions that are guaranteed to not damage objects in the environment, nearby humans, or even the robot itself.  
Third, robots should operate in real-time to quickly adjust their behavior as their environment or task changes.

Modern model-based motion planning frameworks are typically hierarchical and consist of three levels: a high-level planner, a mid-level trajectory planner, and a low-level tracking controller. 
The high-level planner generates a sequence of discrete waypoints between the start and goal locations of the robot. 
The mid-level trajectory planner computes time-dependent velocities and accelerations at discrete time instances that move the robot between consecutive waypoints. 
The low-level tracking controller attempts to track the trajectory as closely as possible. 
While variations of this motion planning framework have been shown to work on multiple robotic platforms \cite{holmes2020armtd}, there are still several limitations that prevent wide-scale, real-world adoption of this method. 
For instance, this approach can be computationally expensive especially as the robot complexity increases, which can make it impractical for real-time applications. 
By introducing heuristics such as reducing the number of discrete time instances where velocities and accelerations are computed at the mid-level planner, many algorithms achieve real-time performance at the expense of robot safety. 
Unfortunately, this increases the potential for the robot to collide with obstacles.

\begin{figure}[t]
    \centering
    \includegraphics[width=0.93\columnwidth]{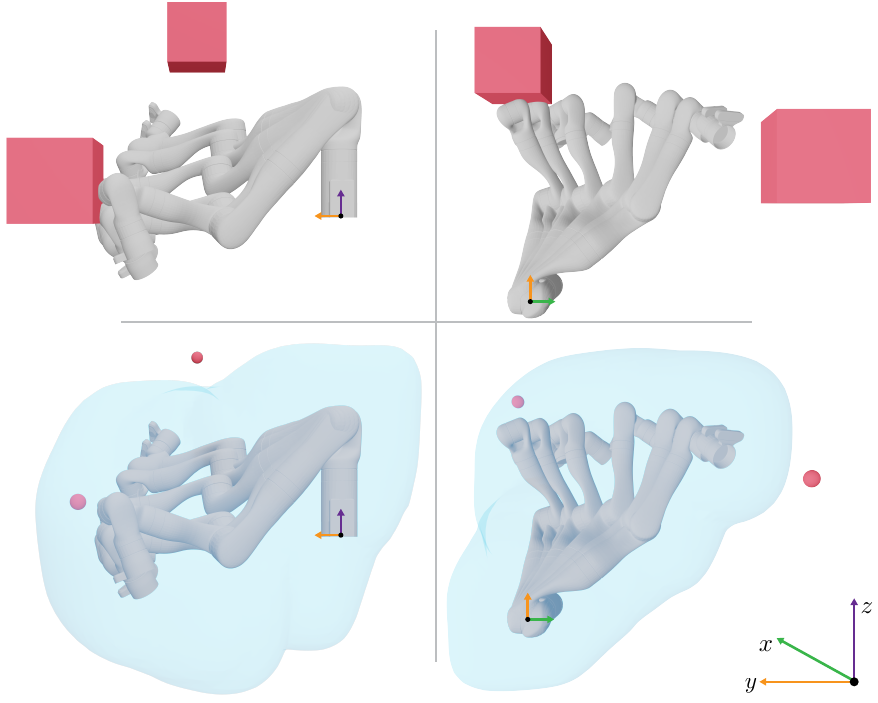}
    \caption{\methodname{} is a neural implicit safety representation that computes distances between the swept volume of a robotic arm and obstacles within a receding-horizon trajectory planning framework. The top panels show two different orthographic views of a single planning iteration with several intermediate poses of the robot arm in collision with one of the obstacles (red cubes). The bottom panels show orthographic views of the 3D  reconstruction of \methodname{}s signed distance field (transparent blue) with one of the obstacle centers (red spheres) interior to \methodname{}'s zero-level set.}
    \label{fig:network}
    \vspace*{-0.5cm}
\end{figure}

To resolve this challenge, this paper proposes Reachability-based Signed Distance Functions (\methodname{}s), a neural implicit representation for robot safety that can be effectively embedded within a mid-level trajectory optimization algorithm.
\methodname{} is a novel variation of the signed distance function (SDF) \cite{park_deepsdf_2019,gropp2020implicit,deep_lipschitz_sdf} that computes the distance between the swept volume (\emph{i.e.}, reachable set) of robot manipulator and obstacles in its environment.
We use \methodname{}s within a receding-horizon trajectory planning framework to enforce safety by implicitly encoding obstacle-avoidance constraints.
\methodname{} has advantages over traditional model-based representations for obstacle-avoidance.
First, by approximating the swept volume \methodname{} learns a continuous-time representation for robot safety.
Second, \methodname{} replaces the need to do computationally expensive collision checking at every planning iteration with a rapidly computable forward pass of the network.
Third, \methodname{}'s fast inference and gradient computations make it ideal as a constraint in trajectory optimization problems.
Fourth, as we illustrate in this paper, \methodname{} scales better than existing planning algorithms with respect to the dimension of the system.

\subsection{Related Work}
\label{sec:related_work}
Our approach lies at the intersection of swept volume approximation, neural implicit shape representation, and robot motion planning.
We summarize algorithms in these areas and highlight their computational tractability. 

Swept volume computation \cite{blackmore_differential_1990, blackmore_analysis_1992, blackmore_analysis_1994, blackmore_sweep-envelope_1997, blackmore_trimming_1999} has a rich history in robotics \cite{abdel-malek_swept_2006} where it has been used for collision detection during motion planning \cite{spatial_planningPerez}.
Because computing exact swept volumes of articulated robots is analytically intractable \cite{abdel-malek_swept_2006}, many algorithms rely on approximations using convex polyhedra, occupancy grids, and CAD models \cite{Campen2010PolygonalBE, Kim2003FastSV, Gaschler2013RobotTA}.
However, these methods often suffer from high computational costs and are generally not suitable when generating complex robot motions and as a result, are only applied while performing offline motion planning \cite{perrin2012svfootstep}.
To address some of these limitations, a recent algorithm was proposed to compute a single probabilistic road map offline whose safety was verified online by performing a parallelized collision check with a precomputed swept volume at run-time \cite{Murray2016RobotMP}.
However, this method was not used for real-time trajectory generation. 

An alternative to computing swept volumes is to buffer the robot and perform collision-checking at discrete time instances along a given trajectory.
This is common with state-of-art trajectory optimization approaches such as CHOMP \cite{zucker2013chomp} and TrajOpt \cite{schulman2014trajopt}.
Although these methods have been demonstrated to generate robot motion plans in real-time by reducing the number of discrete time instances where collision checking occurs, they cannot be considered safe as they enforce collision avoidance via soft penalties in the cost function.

A recent approach called Reachability-based Trajectory Design (RTD) \cite{kousik2019_quad_RTD} combines both swept volume approximation and trajectory optimization to generate safe motion plans in real-time.
At runtime, RTD uses zonotope arithmetic to build a differentiable reachable set representation that overapproximates the swept volume corresponding to a continuum of possible trajectories the robot could follow.
It then solves an optimization problem to select a feasible trajectory such that the subset of the swept volume encompassing the robot's motion is not in collision.
Importantly, the reachable sets are constructed so that obstacle avoidance is satisfied in continuous-time.
While extensions of RTD have demonstrated real-time, certifiably-safe motion planning for robotic arms \cite{holmes2020armtd, armour2023arxiv} with seven degrees of freedom, applying RTD to higher dimensional systems is challenging because, as we illustrate in this paper, it is unable to construct reach sets rapidly.

A growing trend in machine learning and computer vision is to implicitly represent 3D shapes with learned neural networks.
One seminal work in this area is DeepSDF \cite{park_deepsdf_2019}, which was the first approach to learn a continuous volumetric field that reconstructs the zero-level set of entire classes of 3D objects.
Gropp et. al \cite{gropp2020implicit} improved the training of SDFs by introducing an \emph{Eikonal} term into their loss function that acts as an implicit regularizer to encourage the network to have unit $l2$-norm gradients.
Neural implicit representations have also been applied to robotics problems including motion planning \cite{adamkiewicz2022vision, huhneural}, mapping \cite{ortiz_isdf_2022, camps2022learning}, and manipulation \cite{ichnowski2021dex, driess2022learning, yen2022nerf}.
Particularly relevant to our current approach is the work by Koptev et. al \cite{koptev2022implicit}, which learned an SDF as an obstacle-avoidance constraint for safe reactive motion planning.
Similar to approaches described above, \cite{koptev2022implicit} only enforces safety constraints at discrete time points.

\subsection{Contributions}
The present work investigates learned neural implicit representations with reachability-based trajectory design for fast trajectory planning. 
The contributions of this paper are as follows:
\begin{enumerate}[label={\arabic*.}]
    \item A neural implicit representation called \methodname{} that computes the distance between a parameterized swept volume trajectory and obstacles;
    \item An efficient algorithm to generate training data to construct \methodname{}; and
    \item A novel, real-time optimization framework that utilizes \methodname{} to construct a collision avoidance constraint.
\end{enumerate}
To illustrate the utility of this proposed optimization framework we apply it to perform real-time motion planning on a variety of robotic manipulator systems and compare it to several state of the art algorithms. 

The remainder of the paper is organized as follows:
Section \ref{sec:preliminaries} summarizes the set representations used throughout the paper;
Section \ref{sec:arm} describes the safe motion planning problem of interest;
Section \ref{sec:RDF} defines several distance functions that are used to formulate RDF;
Section \ref{sec:proposed_method} formulates the safe motion planning problem;
Section \ref{sec:NN} describes how to build \methodname{} and use it within a safe motion planning framework;
Sections \ref{sec:experimental_setup} and \ref{sec:results} summarize the evaluation of the proposed method on a variety of different example problems.

\section{Preliminaries}
\label{sec:preliminaries}

This section describes the representations of sets and operations on these representations that we use throughout the paper.
Note all norms that are not explicitly specified are the $2$-norm. 

\subsection{Zonotopes and Polynomial Zonotopes}

We begin by defining zonotopes, matrix zonotopes, and polynomial zonotopes.

\begin{defn}\label{defn:zonotope}
A \emph{zonotope} $Z \subset \R^n$ is a convex, centrally-symmetric polytope defined by a \emph{center} $c \in \R^n$, \emph{generator matrix} $G \in \R^{n\times\pzn}$, and \emph{indeterminant vector} $\beta \in \R^\pzn$:
\begin{equation}
\label{eq:zonotope}
    Z = \left\{ z \in \R^n \, \mid \, 
            z = c + G\,\beta, \, ||\beta||_{\infty}\leq1
        \right\}
\end{equation}
where there are $\pzn \in \N$ generators.
When we want to emphasize the center and generators of a zonotope, we write $Z = (c,G)$.
\end{defn}

\begin{defn}
A \emph{matrix zonotope} $Z \subset \R^n$ is a convex, centrally-symmetric polytope defined by a \emph{center} $c \in \R^n$, \emph{generator matrix} $G \in \R^{n\times\pzn}$, and \emph{indeterminant vector} $\beta \in \R^\pzn$:
\begin{equation}
\label{eq:matrix_zonotope}
    Z = \left\{ z \in \R^n \, \mid \, 
            z = c + G\,\beta, \, ||\beta||_{\infty}\leq1
        \right\}
\end{equation}
where there are $\pzn \in \N$ generators.
\end{defn}


\begin{defn}[Polynomial Zonotope]
\label{def:pz}
    A \defemph{polynomial zonotope} $\pz{P} \subset \R^n$ is given by its generators $\pzgi \in \R^n$ (of which there are $\pzn$), exponents $\pzei \in \N^\pzn$, and indeterminates $\pzv \in [-1,1]^\pzn$ as
    \begin{align}\label{eq:pz_definition}
        \pz{P} = \PZ{\pzgi, \pzei, \pzv} = \left\{
                z \in \R^n \, \mid \,
                z = \sum_{i=0}^{\pzn} \pzgi \pzv ^{\pzei}, \, \pzv \in [-1, 1]^\pzn
            \right\}.
    \end{align}
    We refer to $\pzv ^{\pzei}$ as a \defemph{monomial}.
    A \defemph{term} $\pzgi\pzv ^{\pzei}$ is produced by multiplying a monomial by the associated generator $\pzgi$.
\end{defn}
Note that one can think of a zonotope as a special case of polynomial zonotope where one has an exponent made up of all zeros and the remainder of exponents only have one non-zero element that is equal to one.
As a result, whenever we describe operations on polynomial zonotopes they can be extended to zonotopes.
When we need to emphasize the generators and exponents of a polynomial zonotope, we write $\pz{P} = \PZ{\pzgi, \pzei, \pzv}$.
Throughout this document, we exclusively use bold symbols to denote polynomial zonotopes.

\subsection{Operations on Zonotopes and Polynomial Zonotopes}
This section describes various set operations.

\subsubsection{Set-based Arithmetic}

Given a set $\Omega \subset \mathbb{R}^{\nd}$, let $\partial \Omega \subset \mathbb{R}^{\nd}$ be its boundary and $\Omega^{c} \subset \mathbb{R}^{\nd}$ denote its complement.

\begin{defn}\label{def:convhull}
The \defemph{convex hull operator} $\conv: \mathbb{R}^{\nd} \to \mathbb{R}^{\nd} $ is defined by
\begin{equation}
    \conv(\Omega) = \bigcap C_{\alpha}
\end{equation}
where $C_{\alpha}$ is a convex set containing $\Omega$.
\end{defn}

Let $U$, $V \subset \R^n$.
The \defemph{Minkowski Sum} is $U \oplus V  = \{ u + v \, \mid \, u \in U, v \in V \}$; 
the \defemph{Multiplication} of $UV = \{ u v \, \mid \, u \in U, v \in V \}$ where all elements in $U$ and $V$ must be appropriately sized to ensure that their product is well-defined.

\subsubsection{Polynomial Zonotope Operations}

As described in Tab. \ref{tab:poly_zono_operations}, there are a variety of operations that we can perform on polynomial zonotopes (\emph{e.g.}, minkowski sum, multiplication, etc.).
The result of applying these operations is a polynomial zonotope that either exactly represents or over approximates the application of the operation on each individual element of the polynomial zonotope inputs.
The operations given in Tab. \ref{tab:poly_zono_operations} are rigorously defined in \cite{armour2023arxiv}.
A thorough introduction to polynomial zonotopes can be found in \cite{kochdumper2020sparse}.

One desirable property of polynomial zonotopes is the ability to obtain subsets by plugging in values of known indeterminates.
For example, say a polynomial zonotope $\pz{P}$ represented a set of possible positions of a robot arm operating near an obstacle.
It may be beneficial to know whether a particular choice of $\pz{P}$'s indeterminates yields a subset of positions that could collide with the obstacle.
To this end, we introduce the operation of ``slicing'' a polynomial zonotope $\pz{P} = \PZ{ \pzgi, \pzei, \pzv }$  by evaluating an element of the indeterminate $\pzv$.
Given the $j$\ts{th} indeterminate $\pzv_j$ and a value $\sigma \in [-1, 1]$, slicing yields a subset of $\pz{P}$ by plugging $\sigma$ into the specified element $\pzv_j$:
\begin{equation}
    \label{eq:pz_slice}
   \hspace*{-0.25cm} \setop{slice}(\pz{P}, \pzv_j, \sigma) \subset \pz{P} =
        \left\{
            z \in \pz{P} \, \mid \, z = \sum_{i=0}^{\pzn} \pzgi \pzv ^{\pzei}, \, \pzv_j = \sigma
        \right\}.
\end{equation}

One particularly important operation that we require later in the document, is an operation to bound the elements of a polynomial zonotope.
It is possible to efficiently generate these upper and lower bounds on the values of a polynomial zonotope through overapproximation.
In particular, we define the $\setop{sup}$ and $\setop{inf}$ operations which return these upper and lower bounds, respectively, by taking the absolute values of generators.
For $\pz{P} \subseteq \R^n$, these return
\begin{align}
    \setop{sup}(\pz{P}) = g_0 + \sum_{i=1}^{\pzn} \abs{\pzgi}, \label{eq:pz_sup}\\
    \setop{inf}(\pz{P}) = g_0 - \sum_{i=1}^{\pzn} \abs{\pzgi}. \label{eq:pz_inf}
\end{align}
Note that for any $z \in \pz{P}$,  $\setop{sup}(\pz{P}) \geq z$ and $\setop{inf}(\pz{P}) \leq z$, where the inequalities are taken element-wise.
These bounds may not be tight because possible dependencies between indeterminates are not accounted for, but they are quick to compute.

\begin{table}[t]
    \centering
    \begin{tabular}{c|c}
        Operation & Computation \\
        \hline
        $\pz{P}_1 \oplus \pz{P}_2$ (PZ Minkowski Sum) (\cite{armour2023arxiv}, eq. (19)) & Exact \\
        $\pz{P}_1\pz{P}_2$ (PZ Multiplication) (\cite{armour2023arxiv}, eq. (21)) & Exact \\
        $\setop{slice}(\pz{P}, \pzv_j, \sigma)$ (\cite{armour2023arxiv}, eq. (23)) & Exact \\
        $\setop{inf}(\pz{P})$ (\cite{armour2023arxiv}, eq. (24))  & Overapproximative \\
        $\setop{sup}(\pz{P})$ (\cite{armour2023arxiv}, eq. (25))   & Overapproximative \\
        $f(\pz{P}_1) \subseteq \pz{P}_2$ (Taylor expansion) (\cite{armour2023arxiv}, eq. (32)) & Overapproximative 
    \end{tabular}
    \caption{Summary of polynomial zonotope operations.
    }
    \label{tab:poly_zono_operations}
\end{table}


\section{Arm and Environment}
\label{sec:arm}

This section summarizes the robot and environmental model that is used throughout the remainder of the paper. 

\subsection{Robotic Manipulator Model}
Given an $\nq$ degree of freedom serial robotic manipulator with configuration space $Q$ and a compact time interval $T \subset \R$ we define a trajectory for the configuration as $q: T \to Q \subset \R^{\nq}$. 
The velocity of the robot is $\dot{q}: T \to \R^{\nq}$.
Let $\Nq = \{1,\ldots,\nq\}$.
We make the following assumptions about the structure of the robot model:
\begin{assum}
The robot operates in an $\nd$-dimensional workspace, which we denote $W \subset \R^{\nd}$.
The robot is composed of only revolute joints, where the $j$\ts{th} joint actuates the robot's $j$\ts{th} link.
The robot's $j$\ts{th} joint has position and velocity limits given by $\qj \in [\qlim^-, \qlim^+]$ and $\qdj \in [\dqlim^-, \dqlim^+]$ for all $t \in T$, respectively.
Finally, the robot is fully actuated, where the robot's input is given by $u: T \to \R^{\nq}$. 
\end{assum}
\noindent One can make the one-joint-per-link assumption without loss of generality by treating joints with multiple degrees of freedom (\emph{e.g.}, spherical joints) as links with zero length.
Note that we use the revolute joint portion of this assumption to simplify the description of forward kinematics; however, these assumptions can be easily extended to more complex joint using the aforementioned argument or can be extended to prismatic joints in a straightforward fashion.

Note that the lack of input constraints means that one could apply an inverse dynamics controller \cite{spong2005textbook}  to track any trajectory of of the robot perfectly. 
As a result, we focus on modeling the kinematic behavior of the manipulator. 
Note, that the approach presented in this paper could also be extended to deal with input limits using a dynamic model of the manipulator; however, in the interest of simplicity we leave that extension for future work.

\subsubsection{Arm Kinematics}
Next, we introduce the robot's kinematics. 
Suppose there exists a fixed inertial reference frame, which we call the \textit{world} frame.
In addition suppose there exists a \textit{base} frame, which we denote the $0$\ts{th} frame, that indicates the origin of the robot's kinematic chain.
We assume that the $j$\ts{th} reference frame $\{\hat{x}_j, \hat{y}_j, \hat{z}_j\}$ is attached to the robot's $j$\ts{th} revolute joint, and that $\hat{z}_j = [0, 0, 1]^\top$ corresponds to the $j$\ts{th} joint's axis of rotation.
Then for a configuration at a particular time, $\q$, the position and orientation of frame $j$ with respect to frame $j-1$ can be expressed using homogeneous transformations \cite[Ch. 2]{spong2005textbook}:

\begin{equation}
\label{eq:homogeneous_transform}
    \homtrans_{j}^{j-1}(\qj) = 
    \begin{bmatrix} R\jssmu (\qj) & p\jssmu \\
    \zeros  & 1 \\
    \end{bmatrix},
\end{equation}
where $R\jssmu (\qj)$ is a configuration-dependent rotation matrix and $p\jssmu$ is the fixed translation vector from frame $j-1$ to frame $j$.

With these definitions, we can express the forward kinematics of the robot.
Let $\FK_j: Q \to \R^{4 \times 4}$ map the robot's configuration to the position and orientation of the $j$\ts{th} joint in the world frame:
\begin{equation}\label{eq:fk_j}
    \FK_j(\q) = 
    \prod_{l=1}^{j} \homtrans_{l}^{l-1}(q_l(t)) = 
    \begin{bmatrix} R_j(\q) & p_j(\q) \\
    \zeros  & 1 \\
    \end{bmatrix},
\end{equation}
where 
\begin{equation}
    R_j(\q) \coloneqq R_j^{0}(\q) = \prod_{l=1}^{j}R_{l}^{l-1}(\ql)
\end{equation}
and 
\begin{equation}
    p_j(\q) = \sum_{l=1}^{j} R_{l}(\q) p_{l}^{l-1}.
\end{equation}

\subsection{Arm Occupancy}
Next, we define the forward occupancy of the robot by using the arm's kinematics to describe the volume occupied by the arm in the workspace.
Let $\pz{L_j} \subset \R^3$ denote a polynomial zonotop overapproximation to the volume occupied by the $j$\ts{th} link with respect to the $j$\ts{th} reference frame. 
The forward occupancy of link $j$ is the map $\FO_j: Q \to \pow{W}$ defined as
\begin{align}\label{eq:forward_occupancy_j}
     \FO_j(\q) &= p_j(\q) \oplus R_j(\q) L_j,
\end{align}
where the first expression gives the position of joint $j$ and the second gives the rotated volume of link $j$.
The volume occupied by the entire arm in the workspace is given by the function $\FO: Q \to \pow{W}$ that is defined as
\begin{align}\label{eq:forward_occupancy}
    \FO(\q) = \bigcup_{j = 1}^{\nq} \FO_j(\q) \subset W. 
\end{align}
For convenience, we use the notation $\FO(q(T))$ to denote the forward occupancy over an entire interval $T$.

\subsection{Environment}
Next, we describe the arm's environment and its obstacles.

\subsubsection{Obstacles}
The arm must avoid obstacles in the environment while performing motion planning. 
These obstacles satisfy the following assumption:
\begin{assum}
\label{assum:obstacles}
The transformation between the world frame of the workspace and the base frame of the robot is known, and obstacles are represented in the base frame of the robot.
At any time, the number of obstacles $\nObs \in \N$ in the scene is finite.
Let $\obsset$ be the set of all obstacles $\{ O_1, O_2, \ldots, O_{\nObs} \}$.
Each obstacle is convex, bounded, and static with respect to time.
The arm has access to a zonotope that overapproximates the obstacle's volume in workspace.
Each zonotope overapproximation of the obstacle has the same volume and is an axis-aligned cube.
\end{assum}
\noindent A convex, bounded object can always be overapproximated as a zonotope \cite{guibas2003zonotopes}.
In addition, if one is given a non-convex bounded obstacle, then one can outerapproximate that obstacle by computing its convex hull. 
If one has an obstacle that is larger than the pre-fixed axis-aligned cube, then one can introduce several axis-aligned cubes whose union is an overapproximation to the obstacle. 
Note because we use the zonotope overapproximation during motion planning, we conflate the obstacle with its zonotope overapproximation throughout the remainder of this document. 

Dynamic obstacles may also be considered within the \methodname framework by introducing a more general notion of safety \cite[Definition 11]{vaskov2019towards}, but we omit this case in this paper to ease exposition.
Finally, if a portion of the scene is occluded then one can treat that portion of the scene as an obstacle.
 We say that the arm is \textit{in collision} with an obstacle if $\FO_j(\q) \cap O_{\ell} \neq \emptyset$ for any $j \in \Nq$ or $\ell \in \NObs$ where $\NObs = \{1,\ldots,\nObs\}$. 

\subsection{Trajectory Design}
\label{subsec:trajectory}
Our goal is to develop an algorithm to compute safe trajectories in a receding-horizon manner by solving an optimization program over parameterized trajectories at each planning iteration.
These parameterized trajectories are chosen from a pre-specified continuum of trajectories, with each uniquely determined by a \textit{trajectory parameter} $k \in K \subset \R^{n_k}$, $n_k \in \N$.
$K$ is compact and can be designed in a task dependent or robot morphology-specific fashion \cite{holmes2020armtd, kousik2020bridging, kousik2019_quad_RTD, liu2022refine}, as long as it satisfies the following properties.

\begin{defn}[Trajectory Parameters]
\label{defn:traj_param}
For each $k \in K$, a \emph{parameterized trajectory} is an analytic function $q(\,\cdot\,; k) : T \to Q$ that satisfies the following properties:
\begin{outline}[enumerate]
\1 The parameterized trajectory starts at a specified initial condition $(q_0, \dot{q}_0)$, so that $q(0; k) = q_0$, and $\dot{q}(0; k) = \dot{q}_0$.
\1 $\dot{q}(\tfin; k) = 0$  (i.e., each parameterized trajectory brakes to a stop, and at the final time has zero velocity).
\end{outline}
\end{defn}
\noindent The first property allows for parameterized trajectories to be generated online.
In particular, recall that \methodname performs real-time receding horizon planning by executing a desired trajectory computed at a previous planning iteration while constructing a desired trajectory for the subsequent time interval.
The first property allows parameterized trajectories that are generated by \methodname to begin from the appropriate future initial condition of the robot.
The second property ensures that a fail safe braking maneuver is always available.


\section{Reachability-based Signed Distance Functions}
\label{sec:RDF}

This section defines the signed distance function between sets. 
Signed distance functions are used in robotics in a variety of applications including representing collision avoidance constraints. 
This section describes how to extend the signed distance function to a distance function between the forward occupancy of a robot and an obstacle. 
This novel distance function, which we call the reachability-based signed distance function (\methodname{}), enables us to formulate the collision avoidance problem between a parameterized reachable set and an obstacle as an optimization problem. 


\subsection{Overview of Signed Distance Fields}

We begin by defining an unsigned distance function:
\begin{defn}
Given a set $\Omega \subset \mathbb{R}^{\nd}$, the \defemph{distance function} associated with $\Omega$ is defined by
    \begin{equation}
       \dist(x;\Omega) = \min_{y \in \partial\Omega} \|x - y\|.
    \end{equation}
The \defemph{distance between two sets} $\Omega_1, \Omega_2 \subset \mathbb{R}^{\nd}$ is defined by
    \begin{equation}
       \dist(\Omega_1,\Omega_2) = \min_{\substack{x \in \partial\Omega_1 \\ y \in \partial\Omega_2}} \|x - y\|.
    \end{equation}
\end{defn}
Notice that this distance function is zero for sets that have non-trivial intersection. 
As a result, this distance function provides limited information for such sets (i.e., it is unclear how much they are intersecting with one another).
To address this limitation, we consider the following definition:
\begin{defn} Given a subset $\Omega$ of $\mathbb{R}^{\nd}$, the \defemph{signed distance function} $\sdf$ between a point is a map $\sdf: \mathbb{R}^{\nd} \to \mathbb{R}$ defined as
    \begin{equation}
    \sdf(x; \Omega) = 
        \begin{cases}
          \dist(x, \partial\Omega)  & \text{if } x \in \Omega^{c} \\
          -\dist(x,\partial\Omega) & \text{if } x \in \Omega.
        \end{cases}
\end{equation}
The \defemph{signed distance between two sets} $\Omega_1, \Omega_2 \subset \mathbb{R}^{\nd}$ is defined as
    \begin{equation}
    \sdf(\Omega_1,\Omega_2) = 
        \begin{cases}
          \dist(\Omega_1,\Omega_2)  & \text{if } \Omega_1 \cap \Omega_2 = \emptyset  \\
          -\dist(\Omega_1,\Omega_2) & \text{otherwise}.
        \end{cases}
\end{equation}
\end{defn}
Note that signed distance functions are continuous \cite{dapogny2012computation}, differentiable almost everywhere \cite{dapogny2012computation,evans2022partial}, and satisfy the \emph{Eikonal} equation:
\begin{defn}
Suppose $\sdf$ is the signed distance function associated with a set $\Omega \subset R^{\nd}$. 
Then the gradient of $\sdf$ satisifes the Eikonal Equation which is defined as
    \begin{equation}
   \| \nabla \sdf(x) \| = 1.
\end{equation}
\end{defn}
\noindent We use this property to construct our loss term in \ref{subsec:loss}

\subsection{Reachability-Based Signed Distance Functions}
This subsection describes the reachability-based distance function as the signed distance function associated with forward occupancy of a robot.
\begin{defn}\label{def:rdf_point}
The \defemph{reachability-based distance} function associated with the forward occupancy reachable set $\FO(q(T;k))$ is a mapping 
defined by
\begin{equation}
    \rdf(x, \FO_j(q(T;k))) = \min_{j \in \Nq} \rdf_j(x;\FO_j(q(T;k)))
\end{equation}
where
$\rdf_j$ is the signed distance function associated with the $j^{th}$ forward occupancy $\FO_j$ such that
\begin{equation}
    \rdf_j(x;\FO_j(q(T;k))) = \sdf(x;\FO_j(q(T;k))).
\end{equation}
The \defemph{reachability-based distance} between an obstacle $O \subset \mathbb{R}^d$ and the robot's forward occupancy reachable set $\FO$ is defined by
\begin{equation}
    \rdf(O, \FO(q(T;k))) = \min_{j \in \Nq} \sdf(O, \FO_j(q(T;k))).
\end{equation}
\end{defn}
One can use this distance function to formulate trajectory optimization problems as we describe next. 


\section{Formulating the Motion Planning Problem Using Polynomial Zonotopes}\label{sec:proposed_method}



To construct a collision free trajectory in a receding-horizon fashion, one could try to solve the following nonlinear optimization problem at each planning iteration:
\begin{align}
    \label{eq:optcost}
    &\underset{k \in K}{\min} &&\texttt{cost}(k) \\
    &&& \qkj \subseteq [\qlim^-, \qlim^+] \qquad &\forall j\in\Nq,t\in T\\
    &&& \qdkj \subseteq [\dqlim^-, \dqlim^+] \qquad &\forall j\in\Nq,t\in T \\
    &&& \label{eq:rdf-col-avoid}
    \rdf(O_{\ell}, \FO(q(T;k)))> 0 \qquad &\forall \ell\in\NObs 
\end{align}
The cost function \eqref{eq:optcost} specifies a user-defined objective, such as bringing the robot close to some desired goal.
Each of the constraints guarantee the safety of any feasible trajectory parameter.
The first two constraints ensure that the trajectory does not violate the robot's joint position and velocity limits.
The last constraint ensures that the robot does not collide with any obstacles in the environment.
Note in this optimization problem, we have assumed that the robot does not have to deal with self-intersection constraints. 

Implementing a real-time optimization algorithm to solve this problem is challenging for several reasons.
First, the constraints associated with obstacle avoidance are non-convex. 
Second, the constraints must be satisfied for all time $t$ in an uncountable set $T$.
To address these challenges, a recent paper proposed to represent the trajectory and the forward occupancy of the robot using a polynomial zonotope representation \cite{armour2023arxiv}. 
We summarize these results below.

\subsection{Time Horizon and Trajectory Parameter PZs}
We first describe how to create polynomial zonotopes representing the planning time horizon $T$.
We choose a timestep $\timestep$ so that $\nt \coloneqq \frac{T}{\timestep} \in \N$.
Let $N_t := \{1,\ldots,\nt\}$.
Divide the compact time horizon $T \subset \R$ into $\nt$ time subintervals.
Consider the $i$\ts{th} time subinterval corresponding to $t \in \iv{(i-1)\timestep, i\timestep}$.
We represent this subinterval as a polynomial zonotope $\pz{T_i}$, where 
\begin{equation}
    \label{eq:time_pz}
    \pz{T_i} =
        \left\{t \in T \mid 
            t = \tfrac{(i-1) + i}{2}\timestep + \tfrac{1}{2} \timestep \tvari,\ \tvari \in [-1,1]
        \right\}
\end{equation}
with indeterminate $\tvari \in \iv{-1, 1}$.

Now we describe how to create polynomial zonotopes representing the set of trajectory parameters $K$.
In this work, we choose $K = \bigtimes_{i=1}^{n_q} K_i$, where each $\Kj$ is the compact one-dimensional interval  $\Kj = \iv{-1, 1}$.
We represent the interval $\Kj$ as a polynomial zonotope $\pz{\Kj} = \kjvar$ where $\kjvar \in \iv{-1, 1}$ is an indeterminate.

\subsection{Parameterized Trajectory and Forward Occupancy PZs}
The parameterized position and velocity trajectories of the robot, defined in Def. \ref{defn:traj_param}, are functions of both time $t$ and the trajectory parameter $k$.
Using the time partition and trajectory parameter polynomial zonotopes described above, we create polynomial zonotopes $\pzqji$ that overapproximate $\qkj$ for all $t$ in the $i$\ts{th} time subinterval and $k \in K$ by plugging the polynomial zonotopes $\pz{T_i}$ and $\pz{K}$ into the formula for $\qkj$.

Recall that $\pz{T_i}$ and $\pz{K_j}$ have indeterminates $\tvari$ and $\kjvar$, respectively.
Because the desired trajectories only depend on $t$ and $k$, the polynomial zonotopes $\pzqji$ and $\pzqdji$  depend only on the indeterminates $\tvari$ and $\kvar$.
By plugging in a given $k$ for $\kvar$ via the $\setop{slice}$ operation, we obtain a polynomial zonotope where $\tvari$ is the only remaining indeterminate. 
Because we perform this particular slicing operation repeatedly throughout this document, if we are given a polynomial zonotope, $\pzqdesji$, we use the shorthand $\pzqjki = \setop{slice}(\pzqji, \kvar, k)$.
Importantly, one can apply \cite[Lemma 17]{armour2023arxiv} to prove that the sliced representation is over approximative as we restate below:
\begin{lem}[Parmaeterized Trajectory PZs]
\label{lem:pz_desired_trajectory}
The parameterized trajectory polynomial zonotopes $\pzqdesji$ are overapproximative, i.e., for each $j \in \Nq$ and $k\in \pz{K},$ 
\begin{equation}
    \qkj \in \pzqjki \quad \forall t \in \pz{T_i}
\end{equation}
One can similarly define $\pzqdji$ that are also overapproximative.
\end{lem}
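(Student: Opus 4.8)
The plan is to track set containment through the two stages by which $\pzqjki$ is produced from $\qkj$: first the construction of $\pzqji$ from the time and parameter polynomial zonotopes, and then the slicing operation that substitutes $\kvar = k$. Recall from Def.~\ref{defn:traj_param} that $q(\,\cdot\,;k)$ is analytic in the pair $(t,k)$, hence expressible as a finite composition of scalar/matrix additions, multiplications, and evaluations of analytic scalar functions. By construction $\pzqji$ is the polynomial zonotope obtained by applying this same composition to the inputs $\pz{T_i}$ and $\pz{K}$ (with indeterminates $\tvari$ and $\kvar$), replacing each elementary operation by its polynomial-zonotope counterpart from Tab.~\ref{tab:poly_zono_operations}.

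For the base case, note that \eqref{eq:time_pz} defines $t$ as an affine bijective image of $\tvari \in [-1,1]$, so every $t \in \pz{T_i}$ corresponds to a unique admissible $\tvari$; likewise $\pz{K}$ is built so that each $k \in \pz{K}$ is the value of the parameter indeterminate at $\kvar = k$. Thus for every $t \in \pz{T_i}$ and $k \in \pz{K}$ there is an assignment of the input indeterminates whose image under $\pz{T_i}$ and $\pz{K}$ is exactly $(t,k)$. I would then induct on the structure of the composition: every row of Tab.~\ref{tab:poly_zono_operations} is labeled \emph{Exact} or \emph{Overapproximative}, which is precisely the statement that, whenever the operand polynomial zonotopes contain the true operands under a fixed assignment of indeterminates, the output polynomial zonotope contains the true result under that same assignment. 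Propagating this through the finitely many operations defining $\pzqji$ yields $q(t;k) \in \pzqji$, realized at the assignment identified above; quantifying over all such assignments gives $\qkj \in \pzqji$ for all $t \in \pz{T_i}$ and all $k \in \pz{K}$.

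To finish, apply the slicing step: $\pzqjki = \setop{slice}(\pzqji,\kvar,k)$, and by \eqref{eq:pz_slice} together with the \emph{Exact} label for $\setop{slice}$ in Tab.~\ref{tab:poly_zono_operations}, this set equals $\{\, z \in \pzqji : \kvar = k \,\}$. Since the assignment witnessing $q(t;k) \in \pzqji$ already fixes $\kvar = k$ (leaving $\tvari$ free over $[-1,1]$, i.e. $t$ free over $\pz{T_i}$), membership is preserved: $\qkj \in \pzqjki$ for all $t \in \pz{T_i}$. The claim for $\pzqdji$ is identical, because $\dot q(\,\cdot\,;k)$ is likewise analytic in $(t,k)$ and $\pzqdji$ is assembled from $\pz{T_i}$ and $\pz{K}$ by the same mechanism (equivalently, one differentiates the polynomial-zonotope trajectory symbolically, again using only exact or overapproximative operations). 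This is exactly the content imported from \cite[Lemma 17]{armour2023arxiv}.

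The main obstacle is the analytic-function / Taylor-expansion step of Tab.~\ref{tab:poly_zono_operations}: for the elementary nonlinear functions appearing in $q(\,\cdot\,;k)$ one must check that the Taylor-model overapproximation carries a valid interval remainder bound on the domain $\pz{T_i} \times \pz{K}$, so that no containment is lost when a nonpolynomial term is replaced by its polynomial-zonotope image. Every other step is a routine structural induction once the Exact/Overapproximative bookkeeping is set up.
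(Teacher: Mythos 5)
Your proposal is correct and follows essentially the same route as the paper, which itself does not supply an independent proof but defers entirely to \cite[Lemma 17]{armour2023arxiv}; that cited result is exactly the structural argument you reconstruct (exact representation of $t$ and $k$ by the input indeterminates, propagation of dependency-preserving containment through the exact/overapproximative operations of Tab.~\ref{tab:poly_zono_operations}, and preservation of membership under $\setop{slice}$ at $\kvar = k$). Your closing caveat about remainder bounds in the Taylor-expansion step is the right thing to flag for a general analytic trajectory, though for the concrete parameterization in \eqref{eq:traj_parameterization} the trajectory is polynomial in $(t,k)$ and that step is not even exercised.
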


Next, we describe how to use this lemma to construct an overapproximative representation to the forward occupancy. 
In particular, because the rotation matrices $R\jssmu(q_j(t;k))$ depend on $\cos{(q_j(t;k))}$ and $\sin{(q_j(t;k))}$ one can compute  $\cos{(\pzqji)}$ and $\sin{(\pzqji)}$ using Taylor expansions as in  (\cite{armour2023arxiv}, eq. (32)).
By using this property and the fact that all operations involving polynomial zonotopes are either exact or overapproximative, the polynomial zonotope forward occupancy can be computed and proven to be overapproximative:
\begin{lem}[PZ Forward Occupancy]
\label{lem:pz_FO}
Let the polynomial zonotope forward occupancy reachable set for the $j$\ts{th} link at the $i$\ts{th} time step be defined as:
\begin{align}\label{eq:pz_forward_occupancy_j}
     \pzFOjKi = \pz{p_j}(\pzqi) \oplus \pz{R_j}(\pzqi)\pz{L_j},
\end{align}
then for each $j \in \Nq$,  $k \in \pz{K}$, $\FO_j(q(t;k)) \in  \pzFOjki$ for all $t \in \pz{T_i}$.
\end{lem}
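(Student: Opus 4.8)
The plan is to mirror the structure of the definition of forward occupancy in \eqref{eq:forward_occupancy_j} and show that each operation used to assemble it — composition of homogeneous transforms, extraction of rotation and translation blocks, the trigonometric evaluations, and the final Minkowski sum with the link zonotope $\pz{L_j}$ — is either exact or overapproximative in the polynomial zonotope arithmetic summarized in Tab.~\ref{tab:poly_zono_operations}. Since a composition of overapproximative maps applied to an overapproximative input set is again overapproximative, the result follows once the base case is established.

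First I would invoke Lemma~\ref{lem:pz_desired_trajectory} as the base case: for each $j \in \Nq$, each $k \in \pz{K}$, and each $t \in \pz{T_i}$, we have $q_j(t;k) \in \pzqji$. Next I would push this through the rotation matrix: because $R_{l}^{l-1}(q_l(t;k))$ depends on $q_l$ only through $\cos(q_l(t;k))$ and $\sin(q_l(t;k))$, and because the Taylor-expansion operation $f(\pz{P}_1) \subseteq \pz{P}_2$ of Tab.~\ref{tab:poly_zono_operations} guarantees $\cos(q_l(t;k)) \in \cos(\pzqli)$ and $\sin(q_l(t;k)) \in \sin(\pzqli)$ elementwise, the matrix entries of $R_{l}^{l-1}(q_l(t;k))$ lie in the corresponding polynomial zonotope $\pz{R_l^{l-1}}(\pzqli)$. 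Then I would propagate this through the product $R_j(q(t;k)) = \prod_{l=1}^{j} R_l^{l-1}(q_l(t;k))$ and the sum $p_j(q(t;k)) = \sum_{l=1}^{j} R_l(q(t;k)) p_l^{l-1}$ using exactness of PZ multiplication and PZ Minkowski sum (first two rows of Tab.~\ref{tab:poly_zono_operations}), concluding $R_j(q(t;k)) \in \pz{R_j}(\pzqi)$ and $p_j(q(t;k)) \in \pz{p_j}(\pzqi)$. Finally, since $\FO_j(q(t;k)) = p_j(q(t;k)) \oplus R_j(q(t;k)) L_j$ and $L_j \subseteq \pz{L_j}$ by construction, one more application of exact PZ multiplication and Minkowski sum gives $\FO_j(q(t;k)) \in \pz{p_j}(\pzqi) \oplus \pz{R_j}(\pzqi)\pz{L_j} = \pzFOjKi$, and slicing the indeterminate $\kvar$ at $k$ preserves containment by the inclusion property of the $\setop{slice}$ operation in \eqref{eq:pz_slice}, yielding the claim for $\pzFOjki$.

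The one subtlety worth care is that elementwise containment of matrix/vector entries in a set of matrices does not automatically carry through a matrix product — a product of sets of matrices can be strictly larger than the set of products when there are shared indeterminates — but here this is fine precisely because PZ multiplication is designed to overapproximate $\{AB : A \in \pz{P}_1,\, B \in \pz{P}_2\}$, and the dependency bookkeeping on indeterminates is handled internally by the operation (at worst losing tightness, never soundness). So the main obstacle is not a deep argument but careful invocation: one must be explicit that each intermediate object ($\cos$, $\sin$, each $R_l^{l-1}$, the running product, the running sum) is formed by one of the tabulated operations, and that \cite[Lemma 17]{armour2023arxiv} (already restated here as Lemma~\ref{lem:pz_desired_trajectory}) together with the Taylor-expansion bound of \cite{armour2023arxiv} supplies every needed inclusion. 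Given all the heavy lifting resides in the cited operations, the proof is essentially a structural induction on the kinematic chain length $j$, and I expect it to be short.
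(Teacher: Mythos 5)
Your proposal is correct and follows essentially the same route as the paper, which does not write out a formal proof but argues exactly as you do in the paragraph preceding the lemma: apply the Taylor-expansion containment of \cite{armour2023arxiv} to $\cos$ and $\sin$ of the overapproximative trajectory PZs from Lemma~\ref{lem:pz_desired_trajectory}, then note that every remaining operation (PZ multiplication, Minkowski sum, slice) is exact or overapproximative per Tab.~\ref{tab:poly_zono_operations}. Your explicit induction along the kinematic chain and the remark about set-valued matrix products only make the cited argument more careful, not different.
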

\noindent 
For convenience, let
\begin{equation}
    \label{eq:pz_forward_occupancy}
    \pz{FO}(\pzqi) = \bigcup_{j = 1}^{\nq} \pzFOjKi.
\end{equation}

\subsection{PZ-based Optimization Problem}

Rather than solve the optimization problem described in \eqref{eq:optcost} -- \eqref{eq:rdf-col-avoid}, \cite{armour2023arxiv} uses these polynomial zonotope over approximations to solve the following optimization problem:
\begin{align}
    \label{eq:pz_optcost}
    &\underset{k \in K}{\min} &&\texttt{cost}(k) \\
    &&& \pzqjki \subseteq [\qlim^-, \qlim^+] \qquad &\forall j\in\Nq,i\in \Nt\\
    &&& \pzqdjki \subseteq [\dqlim^-, \dqlim^+] \qquad &\forall j\in\Nq,i\in \Nt \\
    &&& \label{eq:pz_rdf-col-avoid}
    \rdf(O_{\ell}, \pzFOki)> 0 \qquad &\forall \ell\in\NObs,i\in \Nt.
\end{align}
This formulation of the trajectory optimization problem has the benefit of being implementable without sacrificing any safety requirements.
In fact, as shown in \cite[Lemma 22]{armour2023arxiv}, any feasible solution to this optimization problem can be applied to generate motion that is collision free.
Though this method can be applied to $7$ degree of freedom systems in real-time, applying this method to perform real-time planning for more complex systems is challenging as we show in Sec. \ref{sec:results}. 
\section{Modeling \methodname{} with Neural Networks}
\label{sec:NN}

This section presents \methodname{}, a neural implicit representation that can encode obstacle-avoidance constraints in continuous-time.
In particular, \methodname{} predicts the distance between obstacles and the entire \emph{reachable set} of a robotic arm.
To construct this neural implicit representation, we require training data.
Unfortunately  computing the exact distance to the reachable set of multi-link articulated robotic arm is intractable because that multi-link arm is a non-convex set.
To build this training data, we rely on the polynomial zonotope-based representations presented in the previous section.

\begin{figure*}[t]
    \centering
    \includegraphics[width=\textwidth]{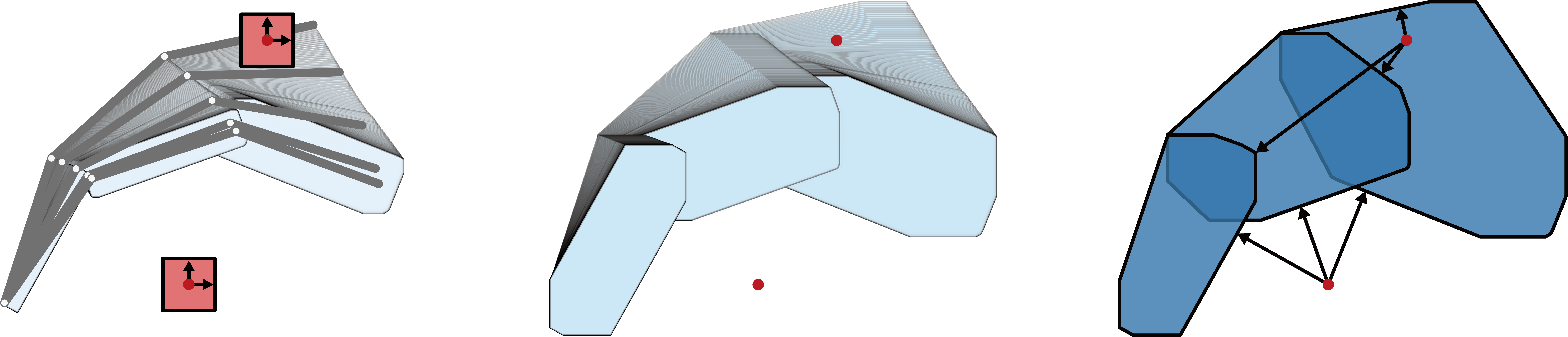}
    \caption{Visual illustration of Alg. \ref{alg:rdf}. (Left) Forward occupancy reachable set as a function of some initial conditions (Alg. \ref{alg:rdf}, line~\ref{lst:line:fo}). (Middle) Forward occupancy reachable set buffered by the obstacle generators (Alg. \ref{alg:rdf}, line~\ref{lst:line:buffered_fo}). (Right) Convex hull of each link's forward occupancy reach set (Alg. \ref{alg:rdf}, line~\ref{lst:line:conv}).}
    \label{fig:dataset}
    \vspace*{-0.2cm}
\end{figure*}


Importantly, we show that we can conservatively approximate the distance between an obstacle and sliced polynomial zonotope-based representation as the solution to a convex program.
This allows us to efficiently generate the training data required to construct our neural implicit representation.
Subsequently, we give an overview of the neural network architecture and loss function used for training.
Finally, we describe how to reformulate the trajectory optimization problem using the neural network representation of the reachability-based signed distance function.

\subsection{Derivation of RDF Approximation}\label{subsec:rdf_approx}
This subsection derives an approximation to the reachability-based signed distance function defined in Def. \ref{def:rdf_point}.
The core idea is to approximate the distance between an obstacle and the polynomial zonotope forward occupancy $\pz{FO}(\pzqi)$ \eqref{eq:pz_forward_occupancy} over of time for an entire trajectory.
Note that slicing a polynomial zonotope of all of its \emph{dependent} coefficients results in a zonotope \cite{kochdumper2020sparse}.  
This allows \methodname{} to approximate both positive and negative (\emph{i.e.} signed) distances by leveraging the zonotope arithmetic described in \ref{subsec:sdf_zono}.
We now present the main theorem of the paper whose proof can be found in supplementary material Appendix \ref{app:theorem}:

\begin{thm}
    Suppose a robot is following a parameterized trajectory $q(t;k)$ for all $t \in T$. 
    Consider an obstacle $O$ with center $\co$ and generators $\Go$ and $\pzFOjki$ with center $c_F$ and generators $G_F$.
    Let $\Pj := \bigcup_{i = 1}^{\nt} \pzFOjkibuf$ where $\pzFOjkibuf = (c_F, \Go  \cup \Gf)$. 
    Define the function $\ardf_j$ as follows:
    \begin{equation}
    \label{eq:ardf_j}
    \ardf_j(\co, \Pj) = 
        \begin{cases}
          \dist(\co, \partial\Pj)  & \text{if } \co \not \in \mathcal{P}  \\
          -\dist(\co, \partial\Pj) & \text{otherwise},
        \end{cases} 
    \end{equation}
    
    and define the function $\ardf$ as follows:
    \begin{equation}
    \label{eq:ardf}
        \ardf(\co, \cup_{j\in \Nq} \Pj) = \min_{j \in \Nq}  \ardf_j(\co, \Pj).
    \end{equation}
    If $\pzFOjki \cap O \neq \emptyset$, then $\ardf(\co, \Pj) \geq  \rdf(O, \FO(q(T;k)))$.
    If $\pzFOjki \cap O = \emptyset$, then $\ardf(\co, \Pj) \leq \rdf(O, \FO(q(T;k)))$.
\end{thm}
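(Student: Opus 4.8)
The plan is to reduce the claimed two-sided bound to per-link, per-subinterval statements about zonotopes and then reassemble them, using the buffering identity for zonotopes together with the over-approximation guarantee of Lemma~\ref{lem:pz_FO}. Since $\rdf(O,\FO(q(T;k)))=\min_{j\in\Nq}\sdf(O,\FO_j(q(T;k)))$ by Def.~\ref{def:rdf_point} and $\ardf(\co,\cup_{j\in\Nq}\Pj)=\min_{j\in\Nq}\ardf_j(\co,\Pj)$ by \eqref{eq:ardf}, it suffices to compare $\ardf_j(\co,\Pj)$ with $\sdf(O,\FO_j(q(T;k)))$ for each fixed $j$ and then check that taking minima over $j$ propagates the inequalities in the required direction.

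The workhorse is the buffering identity: if $F=(c_F,G_F)$ and $O=(\co,\Go)$ are zonotopes then, by central symmetry, $F\oplus(-O)=(c_F-\co,\,G_F\cup\Go)$, so $\co\in(c_F,G_F\cup\Go)$ iff $F\cap O\neq\emptyset$, and $\dist(O,F)=\dist\!\big(\co,(c_F,G_F\cup\Go)\big)$ whenever $F\cap O=\emptyset$. Applying this at each subinterval with $F=\pzFOjki$, recognizing $(c_F,G_F\cup\Go)=\pzFOjkibuf$, and using that distance to a finite union is the minimum of the distances, I obtain $O\cap\bigcup_{i=1}^{\nt}\pzFOjki\neq\emptyset\iff\co\in\Pj$, and $\dist\!\big(O,\bigcup_{i=1}^{\nt}\pzFOjki\big)=\dist(\co,\Pj)$ in the disjoint case.

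If $\bigcup_{i=1}^{\nt}\pzFOjki$ is disjoint from $O$, then $\co\notin\Pj$, so $\ardf_j(\co,\Pj)=\dist(\co,\partial\Pj)=\dist(\co,\Pj)$; by Lemma~\ref{lem:pz_FO} the true link occupancy $\FO_j(q(T;k))$ is a subset of $\bigcup_{i=1}^{\nt}\pzFOjki$ and is therefore also disjoint from $O$, so $\sdf(O,\FO_j(q(T;k)))=\dist(O,\FO_j(q(T;k)))\ge\dist\!\big(O,\bigcup_{i=1}^{\nt}\pzFOjki\big)=\dist(\co,\Pj)=\ardf_j(\co,\Pj)$. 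Taking $\min_j$ yields $\ardf(\co,\cup_j\Pj)\le\rdf(O,\FO(q(T;k)))$, the second assertion.

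The first assertion is where the real work is. When $\bigcup_{i=1}^{\nt}\pzFOjki$ meets $O$ we have $\co\in\Pj$, so $\ardf_j(\co,\Pj)=-\dist(\co,\partial\Pj)\le0$ and the claim becomes a lower bound on $\rdf$. The difficulty is that the signed distance between sets here is defined via boundary-to-boundary distance, not penetration depth, so it is not monotone under inclusion once one set engulfs the other, and the buffering identity degrades to the inequality $\dist(\co,\partial\Pj)\ge\dist\!\big(\partial O,\partial\bigcup_{i=1}^{\nt}\pzFOjki\big)$ (the boundary of $F\oplus(-O)$ only collects sums $a+b$ with aligned supporting normals). My plan is to use the over-approximation in the opposite direction: since $\bigcup_{i=1}^{\nt}\pzFOjki$ is a large swept set, its boundary crosses $\partial O$ whenever they intersect, which pins $\sdf(O,\bigcup_{i=1}^{\nt}\pzFOjki)$ at (or near) $0$ and hence above the strictly negative signed distance of a deeply penetrating true occupancy; I would make this precise through the supporting-hyperplane description of $\partial(F\oplus(-O))$ and by handling the degenerate lower-dimensional zonotopes separately. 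The final subtlety is the $\min_j$ recombination: only some links satisfy the intersection hypothesis, so I would track which links fall in which case and verify the minimum of the link-wise values still dominates $\rdf(O,\FO(q(T;k)))$.
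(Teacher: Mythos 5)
Your treatment of the disjoint case is correct and is essentially the paper's argument: apply Lemma~\ref{lem:pz_FO} to pass from $\FO_j(q(T;k))$ to the union of sliced occupancy zonotopes, use the buffering identity of Lemma~\ref{lem:zono_positive_distance} on each subinterval, collapse the distance-to-a-union into a minimum, and take $\min_j$. (The only cosmetic difference is that the paper takes $\Pj$ to be the \emph{convex hull} of the buffered union, per Alg.~\ref{alg:rdf}, so its corresponding step is an inequality via Def.~\ref{def:convhull} rather than your equality; the direction is unaffected.)

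The first assertion, however, is not proved in your proposal --- you state a plan (``I would make this precise through the supporting-hyperplane description\dots'') rather than an argument, and the plan as sketched does not obviously close. The missing ingredient is the penetration-depth machinery: the paper defines $\pdf(Z_1,Z_2)$ as the minimal translation separating the interiors (Defn.~\ref{defn:zono_pentration_depth}), proves $\pdf(Z_1,Z_2)=\pdf(c_1,Z_{2,\text{buf}})$ for intersecting zonotopes (Lemma~\ref{lem:zono_negative_distance}), and packages both cases into Theorem~\ref{thm:zono_sdf}, so that the negative branch of $\ardf_j$ is exactly $-\pdf(\co,\cdot)$ applied to the buffered set; the overapproximation chain then reverses sign because a larger intersecting set requires a larger separating translation. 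Your instinct that the boundary-to-boundary signed distance is not monotone under inclusion (e.g.\ when one set engulfs the other) is a sharp and legitimate observation --- it is precisely why the argument must be routed through penetration depth rather than through $\dist(\partial O,\partial(\cdot))$ --- but having identified the obstruction you need to actually switch to the penetration-depth characterization to get past it, not try to pin $\sdf$ at zero via boundary crossings (which fails when $O$ lies strictly inside the swept set). Your final remark about tracking which links fall in which case when recombining over $j$ is a fair point that the paper's own proof also elides, but it does not substitute for the missing core of the negative-distance case.
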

This theorem is useful because it allows us to conservatively approximate $\rdf$ using $\ardf$ which computes the distance between a point and a convex set. 
As we show next, this distance computation can be done by solving a convex program.
Recall that a convex hull can be represented as the intersection of a finite number of half planes, \emph{i.e.,}
\begin{align}
    \Pj=\bigcap_{h\in\Nhj}\Hjh 
\end{align}
where $\Nhj=\{1,\cdots,\nhj\}$ and $\nhj$ is the number of half-spaces \cite[Ch. 1]{z-lop-95}.
As a result, one can determine whether a point  $p\in\R^{\nd}$, is in side of $\Pj$ using the following property \cite[Ch. 1]{z-lop-95}:
\begin{align}
    p\in\Pj \Longleftrightarrow \Ajh\,p-\bjh\leq0\; \quad \forall i=\Nhj,
\end{align}  
where $\Ajh\in\R^{\nd},\,\bjh\in\R$ represents each half-space, $\Hjh$. 
As a result, one can compute the distance in \eqref{eq:ardf} as:
\begin{align}
    &\dist(\co,\partial\Pj)= \underset{p}{\min} &&||p-\co|| \\
    &&& \Ajh\,p-\bjh\leq0\qquad\forall h\in\Nhj,
\end{align}
where depending upon the norm chosen in the cost function one can solve the optimization problem using a linear or a convex quadratic program.
In the instance that there is non-trivial intersection between an obstacle and $\Pj$, one can apply the Euclidean projection \cite[p.398]{boyd2004convex} to directly calculate the distance between $\co$ and every half-space $\Hjh$ supporting $\Pj$:
\begin{align}
    \dist(\co,\partial\Pj) &= \min_{h\in\Nhj}\dist(\co,\Hjh)\\
    &= -\max_{h\in\Nhj}\frac{\Ajh\,\co-\bjh}{||\Ajh||}.
\end{align}
\begin{algorithm}[t]
\begin{algorithmic}[1]
\State $\{\pzqi\,:\, i \in \Nt\} \leftarrow (q_0,\dot q_0)$ 
\State $\co, \Go \leftarrow O$ 
\State{\bf for} $i = 1:n_t$ 
    \State\hspace{0.2in}{\bf for} $j = 1:n_q$ 
        \State\hspace{0.4in} $\pzFOjKi \leftarrow \texttt{PZFO}(\pzqi)$ 
        \State\hspace{0.4in} $\pzFOjki \leftarrow \texttt{slice}(\pzFOjKi, k)$ \label{lst:line:fo}
        \State\hspace{0.4in} $\pzFOjkibuf \leftarrow \pzFOjki\oplus Z_O^g$ \label{lst:line:buffered_fo}
    \State\hspace{0.2in}{\bf end for}
\State{\bf end for}
\State{\bf for} $j = 1:n_q$
    \State\hspace{0.2in} $\Pj \leftarrow \conv(\bigcup_{i = 1}^{\nt}\pzFOjkibuf)$ \label{lst:line:conv}
    \State\hspace{0.2in} {\bf if} $c_O \in \Pj$ {\bf then}
        \State\hspace{0.4in} $\ardf_j \leftarrow -\dist(\co,\partial\Pj)$ // negative distance 
    \State\hspace{0.2in} {\bf else}
        \State\hspace{0.4in} $\ardf_j \leftarrow \dist(\co,\partial\Pj)$ // positive distance 
    \State\hspace{0.2in} {\bf end if} 
\State{\bf end for}
\State \textbf{Return} $\{\ardf_j\,:\, j \in \Nq\}$
\end{algorithmic}
\caption{\texttt{RDF}$(q_0, \dot q_0, k, O)$}
\label{alg:rdf}
\end{algorithm}

\begin{figure*}[t]
    \centering
    \includegraphics[width=\textwidth]{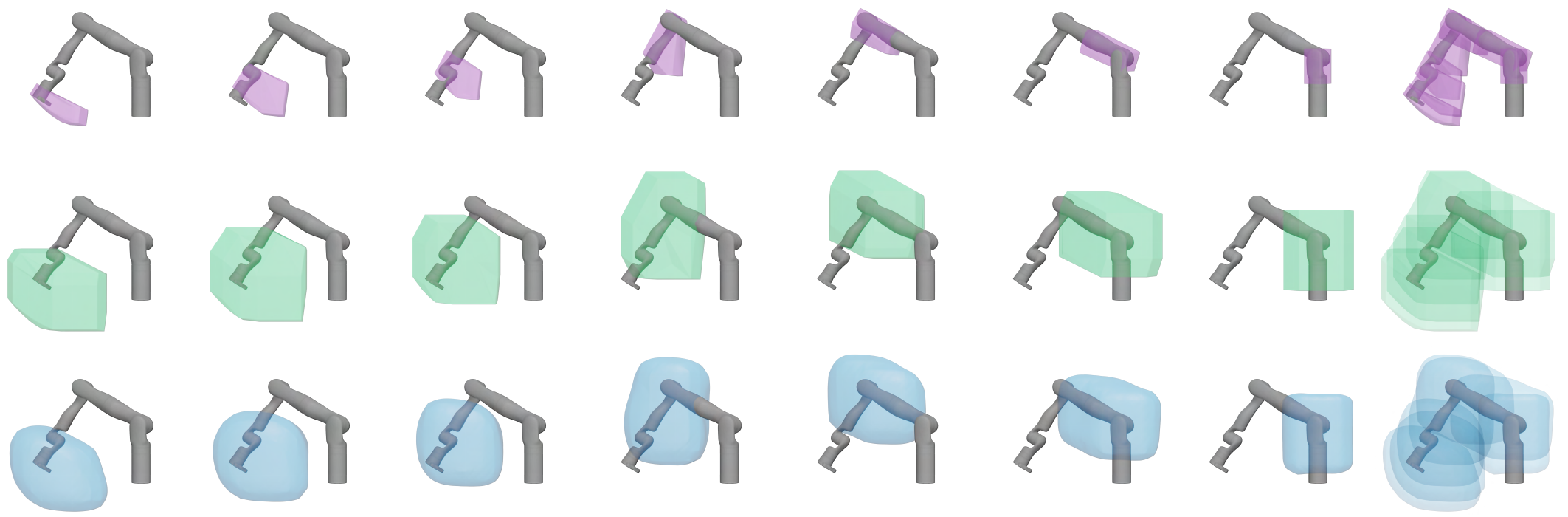}
    \caption{3D Reconstruction of \methodname{}'s zero-level sets compared to Alg. \ref{alg:rdf}. (Top) Convex hull of each link's forward occupancy reach set (purple) before buffering by obstacle generators (Alg. \ref{alg:rdf}, line~\ref{lst:line:fo}). (Middle) Following buffering, the size of the new forward occupancy (green) is increased (Alg. \ref{alg:rdf}, line~\ref{lst:line:buffered_fo}). (Bottom) \methodname{}'s zero-level set (blue) is shown to be a smooth approximation to that of the convex hull buffered forward occupancy reach set (middle, green). }
    \label{fig:zeroset}
    \vspace*{-0.2cm}
\end{figure*}
\subsection{Model Architecture}
We follow the \methodname{} model design of \cite{gropp2020implicit} and use an Multi-Layer Pereceptron (MLP) with 8 hidden layers and a jump connection in the middle between the $4$\ts{th} and $5$\ts{th} hidden layers. 
The network takes as input $x = (q_0, \dot{q}_0, k, \boldsymbol{c_o}) \subset \R^{3\nq + \nd}$, which consists of a concatenation of vectors corresponding to the initial joint positions $q_0$, initial joint velocities $\dot{q}_0$, trajectory parameters $k$, and obstacle center $\co$.
As described in Section \ref{subsec:trajectory} each $q_0, \dot{q}_0, k$ corresponds to particular desired trajectory and reachable set $\FO \subset \R^{\nd}$ that may or may not be in collision.
The network outputs an approximation $\hat{y} = (\hat{r}_1, \hat{r}_2, \cdots, \hat{r}_{n_q})$ of the reachability-based signed distance between the obstacle center $\co$ and forward occupancy of each link $\FO_j$.

\subsection{Loss Function}
\label{subsec:loss}
This section describes the loss function used to train the RDF model. 
We apply a mean square error loss added to an Eikonal loss term similar to \cite{gropp2020implicit}.
The mean square error loss forces the network to learn to predict the distance while the Eikonal loss regularizes the gradient of the RDF prediction. 
Given an input, ground-truth RDF distance pair $x = (q, \dot q, k, \boldsymbol{c_o})$, $y = (\ardf_1, \ardf_2, \cdots, \ardf_{\nq})$ from a batched dataset sample $(X_{batch},Y_{batch})$, our network, parameterized by its weights $\theta$, computes the output batch $\hat{Y}_{batch}=\{\hat{y} | \hat{y}=f_\theta(x), x \in X_{batch}\}$ and results in the loss:
\begin{equation}
    \mathcal{L} = \mathcal{L}_{MSE} + \alpha \cdot \mathcal{L}_{Eikonal}
\end{equation}
where 
\begin{equation}
    \mathcal{L}_{MSE} = \frac{1}{|\hat{Y}|} \sum_{\hat{y} \in \hat{Y}} (\frac{1}{\nq} \sum_{i=1}^{nq} (\hat{r}_i - \Tilde{r}_i)^2)
\end{equation}
\begin{equation}
    \mathcal{L}_{Eikonal} = \frac{1}{|\hat{Y}|} \sum_{\hat{y} \in \hat{Y}} (\frac{1}{\nq} \sum_{i=1}^{\nq}(\|\nabla_{\boldsymbol{c_O}} \hat{r}_i\|-1)^2),
\end{equation}
while $\alpha$ is a hyperparameter that denotes the coefficient of Eikonal loss $\mathcal{L}_{Eikonal}$ used in the total loss $\mathcal{L}$. 

\subsection{RDF-based Trajectory Optimization}
\label{subsec:rdf_planning}

After training, we generate a model $\rdfnn$ that takes in $(q_0, \dot{q}_0, k, c_{O,\ell})$ and predicts the reachability based distance between the obstacle and the robot's forward occupancy. 
Using this representation, we can reformulate the motion planning optimization problem described by \eqref{eq:pz_optcost}--\eqref{eq:pz_rdf-col-avoid} into:
\begin{align}
    \label{eq:rdf-nn-optcost}
    &\underset{k \in K}{\min} &&\texttt{cost}(k) \\
    &&&  \pzqjki \subseteq [\qlim^-, \qlim^+] \qquad &\forall j\in\Nq,i\in \Nt\\
    &&& \pzqdjki \subseteq [\dqlim^-, \dqlim^+] \qquad &\forall j\in\Nq,t\in \Nt \\
    &&& \label{eq:rdf-nn-col-avoid}
    \rdfnn(q_0, \dot q_0, k, c_{O,\ell})>\delta \qquad &\forall \ell\in\NObs
\end{align}
where $\delta$ is a buffer threshold of the RDF collision-avoidance constraint, equation \eqref{eq:rdf-nn-col-avoid}.
Note in particular that one computes the gradients of the last constraint by performing back-propagation through the neural network representation. 
The gradient of the first two constraints can be computed by applying \cite[Section IX.D]{armour2023arxiv}.

\section{\methodname{} Experimental Setup}\label{sec:experimental_setup}
This section describes our experimental setup including simulation environments, how the training and test sets were generated, and how the network hyperparameters were selected.

\subsection{Implementation Details}
We use Gurobi's  quadratic programming solver \cite{gurobi} to construct the ground truth reachability-based positive distance in Alg. \ref{alg:rdf}. 
The \methodname{} model is built and trained with Pytorch \cite{paszke2017automatic}. 
In the motion planning experiment, we ran trajectory optimization with IPOPT \cite{ipopt-cite}. 
A computer with 12 Intel(R) Core(TM) i7-6800K CPU @ 3.40GHz and an NVIDIA RTX A6000 GPU was used for 
experiments in Sec. \labelcref{subsec:rdf_accuracy_runtime,subsec:sdf_accuracy_runtime}. 
A computer with 12 Intel(R) Core(TM) i7-8700K CPU @ 3.70GHz and an NVIDIA RTX A6000 GPU was used for the motion planning experiment in Sec. \labelcref{subsec:exp_planning}.

\subsection{Simulation and Simulation Environment}
Each simulation environment has dimensions characterized by the closed interval $[-1, 1]^{\nd}$ and the base of the robot arm is located at the origin.
For each 2D environment, every link of the robotic arm is of same size and is adjusted according to the number of links $\nq$ to fit into the space.
We consider planar robot arms with 2, 4, 6, 8, and 10 links, respectively.
In each environment all obstacles are static, axis-aligned, and fixed-size where each side has length $\frac{0.2}{1.2 \nq}$.
For example, the 2D 6-DOF arm has a link length of 0.139m while obstacles are squares with side-length 0.028m.
In 3D we use the Kinova Gen3 7-DOF serial manipulator \cite{kinova-user-guide}.
The volume of each Kinova's link is represented as the smallest bounding box enclosing the native link geometry.

\begin{figure}[t]
    \centering
    \includegraphics[width=0.93\columnwidth]{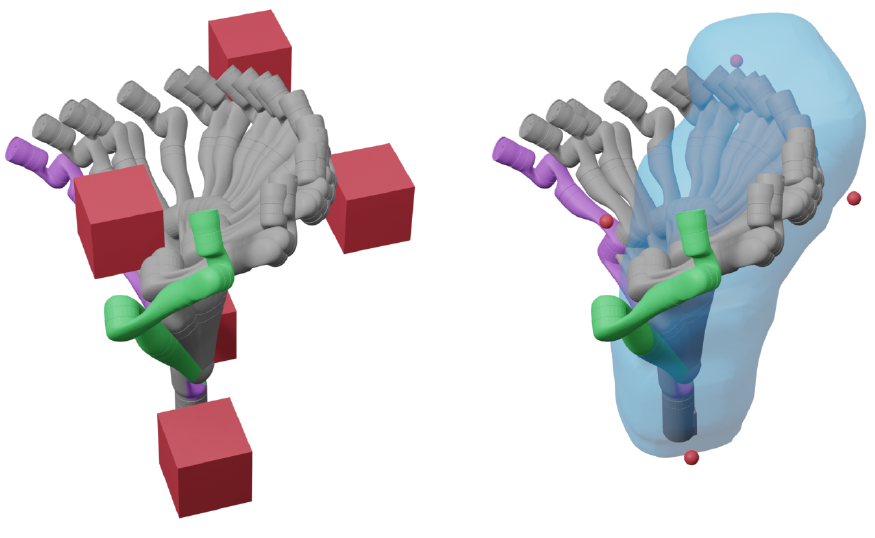}
    \caption{Real-time receding-horizon trajectory planning with \methodname{} in a cluttered environment. (Left) The arm safely moves from the start pose (purple) through intermediate configurations (grey) to reach the goal (green) while avoiding obstacles (red cubes). (Right) During the highlighted planning iteration all obstacle centers (red spheres) remain outside \methodname{}'s zero-level set (blue).}
    \label{fig:motion_planning}
    \vspace*{-0.3cm}
\end{figure}

We also ensure that each robot's initial configuration is feasible and does not exceed its position limits.
Each planning trial is considered a success if the l2-distance between the arm's configuration and goal configuration is within 0.1 rad.
A planning trial is considered a failure if the robot arms collides with an obstacle, if the trajectory planner fails to find a feasible trajectory in two successive steps, or if the robot does not reach the goal within 400 planning steps.

\subsection{Desired Trajectory}
We parameterize our trajectory with piece-wise linear velocity. 
We design \defemph{trajectory parameter} $k=(k_1,\cdots,k_{\nq})\in\R^{\nq}$ as a constant acceleration over $[0, t\plan)$. 
Then, the rest of the trajectory takes a constant braking acceleration over $[t\plan, \tfin]$ to reach stationary at $\tfin$.
Given an initial velocity $\dot{q}_{0}$, the parameterized trajectory is given by
\begin{align}
\label{eq:traj_parameterization}
    \dot{q}(t;k) = \begin{cases}
        \dot{q}_0 + k\,t, &t \in [0,t\plan) \\
        \frac{\dot{q}_0 + k\,t\plan}{\tfin - t\plan}(\tfin - t), &t \in [t\plan,\tfin].
    \end{cases}
\end{align}

\subsection{Dataset}\label{subsec:experimental_setup_dataset}

We compute the dataset for  \methodname{} by randomly sampling data points consisting of the initial joint position $q_0$, initial joint velocity $\dot q_0$, and trajectory parameter $k$.
For each initial condition, we then randomly sample $n_o=16$ obstacle center positions in $[-1, 1]^{\nd}$ and compute the ground truth distances between the forward reachable set of the robot and obstacles using Alg. \ref{alg:rdf}. 
The input to the network is then $x = (q_0, \dot q_0, k, \co)$ where $(q_0, \dot q_0, k)$ specifies the desired trajectory and the reachable set and $\co$ defines position of the center of the obstacle. 
The corresponding label is $y = (\ardf_1, \ardf_2, \cdots, \ardf_{\nq})$ where $\ardf_j$ is the the approximation of reachability-based signed distance to each link outlined in Alg. \ref{alg:rdf}.
For the 2D tasks, the datasets consist of 2.56 million samples while the 3D datasets consist of 5.12 million sample.
80\% of samples in each case are used for training and 20\% are used for validation. 
Another set of the same size as the validation set is generated for testing.



\subsection{Network Hyperparameters}

We train models using all combinations of the following hyperparameters: learning rates $lr=(0.001, 0.0001)$, Eikonal loss coefficients $\alpha = (0.0, 0.001, 0.0001)$, and $\beta=(0.9, 0.999)$ and weight decay $\gamma = 0.01$ for the Adam optimizer.
We train the 2D and 3D models for 300 and 350 epochs, respectively. 
The model that performs best on the validation set is chosen for further evaluation.

\section{Results}\label{sec:results}

This section evaluates the performance of the trained \methodname{} network in terms of its accuracy, inference time, the time required to compute its gradient, and its ability to safely solve motion planning tasks.
We compare \methodname{}'s safety and success rate on motion planning tasks to ARMTD \cite{holmes2020armtd}, CHOMP \cite{zucker2013chomp}, and the method presented in \cite{koptev_implicit_nodate}.

\subsection{\methodname{} Accuracy and Runtime Compared to \QP}\label{subsec:rdf_accuracy_runtime}

This section compares \methodname{}'s distance prediction accuracy to the distances computed by the \QP.
We perform these comparisons for 2D planar multi-link robot arms and the Kinova Gen3 7DOF arm on the test sets that were not used to either train or validate \methodname{}.
As shown in Table \ref{tab:exp_accuracy}, each model has a mean prediction error of $<1$cm in the $l1$-norm.
These results are supported by Fig. \ref{fig:zeroset}, which shows \methodname{}'s zero-level sets are smooth approximations to Alg.1.

\begin{table}[t]
\centering
    \begin{tabular}{ | c | c | c |}
    \hline
    Env. Dim. & DOF & Mean Error (cm) $\downarrow$ \\ \hline
    \multirow{5}{*}{2} & 2 & 0.16 $\pm$ 0.15 \\\cline{2-3} 
     & 4 & 0.26 $\pm$ 0.30 \\\cline{2-3}
     & 6 & 0.37 $\pm$ 0.36 \\\cline{2-3}
     & 8 & 0.39 $\pm$ 0.48 \\\cline{2-3}
     & 10 & 0.51 $\pm$ 0.52 \\\hline 
     3 & 7 & 0.45 $\pm$ 0.48 \\ \hline   
    \end{tabular}
\caption{Mean $l1$-norm error of each RDF model evaluated on test set }
\label{tab:exp_accuracy}
\vspace*{-0.5cm}
\end{table}

We then compared the mean runtime of \methodname{}'s inference and gradient computations to the computation time of \QP and its first-order numerical gradient. 
These comparisons are done over a random sample of 1000 feasible data points $(q_0, \dot{q}_0, k, c_o)$.
As shown in Table \ref{tab:exp_runtime}, \methodname{} computes both distances and gradients at least an order of magnitude faster than \QP.
This result holds even when considering only the time required to solve the quadratic program in \QP.
Note also that \methodname{}'s runtime appears to grow linearly with the DOF of the system, while \QP's grows quadratically.

\begin{table*}[t]
\centering
\resizebox{\textwidth}{!}{
    \begin{tabular}{ | c | c | c | c | c | c | c | c | c |}
    \hline
    Env. Dim. & DOF & Alg.\ref{alg:rdf} Distance & Alg.\ref{alg:rdf} Gradient & QP Distance & QP Gradient & RDF Distance & RDF Gradient  \\ \hline
    \multirow{5}{*}{2} 
       & 2 & 0.08 $\pm$ 0.01 & 0.16 $\pm$ 0.01 & 0.009 $\pm$ 0.006 & 0.015 $\pm$ 0.005 & \textbf{0.0008 $\pm$ 0.00003} & \textbf{0.003 $\pm$ 0.0003} \\\cline{2-8} 
       & 4  & 0.18 $\pm$ 0.01 & 0.70 $\pm$ 0.03 & 0.020 $\pm$ 0.011 & 0.065 $\pm$ 0.021 &  \textbf{0.0009 $\pm$ 0.00004} & \textbf{0.005 $\pm$ 0.0008}          \\\cline{2-8}
       & 6  & 0.28 $\pm$ 0.01 & 1.65 $\pm$ 0.05 & 0.028 $\pm$ 0.012 & 0.132 $\pm$ 0.033 & \textbf{0.0009 $\pm$ 0.00004} & \textbf{0.006 $\pm$ 0.0011}          \\\cline{2-8}
       & 8  & 0.40 $\pm$ 0.02 & 3.19 $\pm$ 0.12 & 0.033 $\pm$ 0.015 & 0.238 $\pm$ 0.065 & \textbf{0.0010 $\pm$ 0.00005} & \textbf{0.007 $\pm$ 0.0016}           \\\cline{2-8}
       & 10 & 0.85 $\pm$ 0.02 & 8.53 $\pm$ 0.11 & 0.030 $\pm$ 0.008 & 0.300 $\pm$ 0.044 & \textbf{0.0010 $\pm$ 0.00018} & \textbf{0.008 $\pm$ 0.0019}  \\\hline 
     3 & 7  & 1.59 $\pm$ 0.11 & 11.1 $\pm$ 0.80 & 0.251 $\pm$ 0.117 & 1.755 $\pm$ 0.807 &  \textbf{0.0011 $\pm$ 0.00013} & \textbf{0.006 $\pm$ 0.0015}   \\\hline   
    \end{tabular}
}
\caption{Mean Runtime of Distance and Gradient Computation $\downarrow$}
\label{tab:exp_runtime}
\end{table*}

\subsection{Accuracy \& Runtime Comparison with SDF}\label{subsec:sdf_accuracy_runtime}
\label{subsec:rdf_vs_sdf}
We compared \methodname{}'s distance prediction accuracy and runtime to that of an SDF-based model similar to \cite{koptev_implicit_nodate} over an entire trajectory in 3D.
To train a discrete-time, SDF-based model similar to that presented in \cite{koptev_implicit_nodate}, we generated a dataset of 5.12 million examples.
Each input to this SDF takes the form $x = (q_d(t;k), \co)$ and the corresponding label is $y = (\asdf_1, \asdf_2, \cdots, \asdf_{\nq})$, where $\asdf_j$ is the distance between $\co$ and a polynomial zonotope over approximation of $j$\ts{th} link of the robot.
Note that, in principle, this is equivalent to evaluating \methodname{} at stationary configurations by specifying $\dot q_0 = 0$ and $k = 0$.
Following \cite{koptev_implicit_nodate}, we also ensure that the number of collision and non-collision samples are balanced for each link. 

For \methodname{}, we generated 1000 samples where the $i$\ts{th} sample is of the form $(q_0, \dot q_0, k, \co)_i$.
Because SDF is a discrete-time model, its corresponding $i$\ts{th} sample is the minimum distance between the obstacle and a set of robot configurations $\{q_d(t_n;k) :\ n \in \Nt \}$ sampled at timepoints $t_n$ evenly separated by a given $\Delta t$.
Note that for SDF, we considered multiple time discretizations ($\Delta t = 0.01s, 0.02, 0.1s, 0.5s, 1.0s$).
During the implementation of SDF, we allow the forward pass through the network to be batched and evaluate all time steps for a given discretization size, simultaneously. 
As shown in Table \ref{tab:sdf_swept_volume}, \methodname{} has lower mean and max $l1$-norm error compared to SDF.
Similarly, \methodname{} has a lower run time than SDF across all time discretizations.

\begin{table}[t]
        \centering
        \resizebox{\columnwidth}{!}{
            \begin{tabular}{ | l | c | c | c | }
            \hline
            \multicolumn{1}{|c|}{Method} & Mean Error (cm) $\downarrow$ & Max Error (cm) $\downarrow$ & Runtime (ms) $\downarrow$  \\ \hline
            RDF & \textbf{0.55 $\pm$ 0.71} & \textbf{11} & \textbf{0.75 $\pm$ 0.02}  \\ \hline
            SDF ($\Delta t = 1.0s$) & 7.16 $\pm$ 11.5 & 88 & 1.17 $\pm$ 0.03\\ \hline
            SDF ($\Delta t = 0.5s$) & 0.96 $\pm$ 1.57 & 38 & 1.18 $\pm$ 0.12 \\ \hline
            SDF ($\Delta t = 0.1s$) & 0.79 $\pm$ 0.93 & 17 & 1.15 $\pm$ 0.03  \\ \hline
            SDF ($\Delta t = 0.02s$) & 0.80 $\pm$ 0.93 & 17 & 1.11 $\pm$ 0.03  \\ \hline
            SDF ($\Delta t = 0.01s$) & 0.80 $\pm$ 0.93 & 17 & 1.12 $\pm$ 0.06  \\ \hline
            \end{tabular}
        }
        \caption{RDF vs. SDF Distance Predictions}
        \label{tab:sdf_swept_volume}
        \vspace*{-0.2cm}
\end{table}




\subsection{Receding Horizon Motion Planning}\label{subsec:exp_planning}
This subsection describes the application of \methodname{} to real-time motion planning and compares its performance to several state of the art algorithms.
We evaluate each method's performance on a reaching task where the robot arm is required to move from an initial configuration to a goal configuration while avoiding collision with obstacles and satisfying joint limits.
Note that the planner is allowed to perform receding-horizon trajectory planning.
We evaluate each planner's success rate, collision rate, and mean planning times under various planning time limits. 
If the planner was unable to find a safe solution, the arm will execute the fail-safe maneuver from the previous plan.

\subsubsection{2D Results}

In 2D, we compare the performance of \methodname{} to ARMTD \cite{holmes2020armtd} across a variety of different arms with varying degrees of freedom from 2-10 DOF to better understand the scalability of each approach.
In each instance, the robot is tasked with avoiding 2 obstacles and is evaluated over $500$ trials.
In the interest of simplicity, we select $\delta$ in \eqref{eq:rdf-nn-col-avoid} to be $3$cm to $3.5$cm which is approximately $10$ times larger than the mean RDF error prediction as described in Tab. \ref{tab:exp_accuracy}.
Because our goal is to develop a planning algorithm that can operate in real-time, we also evaluate the performance of these algorithms when each planning iteration is restricted to compute a solution within $5, 0.3$ and $0.033$s. 
Note that each planning algorithm can only be applied for $400$ planning iterations per trial. 

Tables \ref{tab:success2darmtd} and \ref{tab:success2drdf} summarize the results. 
Across all experiments, when both algorithms are given $5$s per planning iteration, ARMTD was always able to arrive at the goal more frequently than \methodname{}.
A similar pattern seems to hold as the number of degrees of freedom increase when each algorithm is given $0.3$s per planning iteration; however, in the 10 DOF case ARMTD's success rate drastically decreases while \methodname{}'s performance is mostly unaffected.
This is because the computation time of ARMTD grows dramatically as the number of DOFs increases as depicted in Table \ref{tab:time2d}.
This observation is more pronounced in the instance where both planning algorithms are only allowed to take $0.033$s per planning iteration. 
In that instance \methodname{}'s performance is unaffected as the number of DOFs increases while ARMTD is unable to succeed beyond the 2DOF case. 
Note across all experiments, none of the computed trajectories ended in collision.

\subsubsection{3D Results}
In 3D, we compare the performance of \methodname{} to ARMTD and an SDF-based version of the obstacle-avoidance constraints within a receding-horizon trajectory planning framework. 
Note that for \methodname{} and SDF, we buffer their distance predictions with buffer size 3cm, which is approximately five times larger than the mean prediction error in Tab. \ref{tab:sdf_swept_volume}.
We also compare the performance of each of the aforementioned methods in 3D to CHOMP \cite{zucker2013chomp}.
Because our goal is to develop a planning algorithm that can operate in real-time, we also evaluate the performance of these algorithms when each planning iteration is restricted to compute a solution within $5, 0.3$, $0.033$, and $0.025$s. 
We also consider the case of avoiding 5 obstacles and 10 obstacles. 
Each obstacle case was evaluated over $500$ trials.
Note that each planning algorithm can only be applied for $400$ planning iterations per trial.

Tables \ref{tab:success3d5obs} and \ref{tab:success3d10obs} summarize the results of the performance of each algorithm across different time limits for the 5 and 10 obstacles cases, respectively.
First observe that as the number of obstacles increases each algorithms performance decreases. 
Note in particular that for a fixed number of obstacles, the ability of each method to reach the goal decreases as the time limit on planning decreases. 
Though ARMTD initially performs the best, when the time limit is drastically reduced, RDF begins to perform better. 
This is because the computation time of ARMTD grows dramatically as the number of DOFs increases as depicted in Table \ref{tab:time3d}.
The transition from when ARMTD performs best to when RDF performs best occurs when the time limit is restricted to $0.033$s. 
Note that \methodname{}, ARMTD, and SDF are collision free across all tested trials.
However CHOMP has collisions in every instance where it is unable to reach the goal. 
An example of \methodname{} successfully planning around 5 obstacles is shown in Fig. \ref{fig:motion_planning}.

\begin{table}[t]
    \begin{minipage}[ht]{.45\columnwidth}
        \centering
            \begin{tabular}{ | c | c | c | c | }
                \hline 
                \multicolumn{4}{|c|}{ARMTD} \\\hline
                \multirow{2}{*}{DOF} & \multicolumn{3}{c|}{Time Limit}\\ \cline{2-4} 
                 & 5.0 & 0.3 & 0.033 \\\hline \hline
                2 & \textbf{354} & \textbf{354} & \textbf{338} \\\hline
                4 & \textbf{354} & \textbf{354} & 0 \\\hline
                6 & \textbf{358} & \textbf{356} & 0 \\\hline
                8 & \textbf{382} & \textbf{372} & 0 \\\hline
                10 & \textbf{380} & 6 & 0\\\hline
                \end{tabular}
        \caption{\# of successes for ARMTD in 2D planning with 5.0s, 0.3s, and 0.033s time limit $\uparrow$}
        \label{tab:success2darmtd}
    \end{minipage} 
    \hfill
    \begin{minipage}[ht]{.45\columnwidth}
    \centering
        \begin{tabular}{ | c | c | c | c | }
                \hline 
                \multicolumn{4}{|c|}{RDF} \\\hline
                \multirow{2}{*}{DOF} & \multicolumn{3}{c|}{Time Limit}\\ \cline{2-4} 
                 & 5.0 & 0.3 & 0.033 \\\hline \hline
                2 & 253 & 253 & 253 \\\hline
                4 & 246 & 245 & \textbf{243} \\\hline
                6 & 239 & 239 & \textbf{233} \\\hline
                8 & 239 & 239 & \textbf{235} \\\hline
                10 & 246 & \textbf{242} & \textbf{233} \\\hline
                \end{tabular}
        \caption{\# of successes for RDF in 2D planning with 5.0s, 0.3s, and 0.033s time limit $\uparrow$}
        \label{tab:success2drdf}
    \end{minipage}
\end{table}

\begin{table}[t]
        \centering
        \begin{tabular}{ | c | c | c |}
                \hline 
                DOF & RDF & ARMTD \\ \hline
                2 & \textbf{0.022 $\pm$ 0.241} & 0.034 $\pm$ 0.127 \\\hline
                4 & \textbf{0.023 $\pm$ 0.246} & 0.071 $\pm$ 0.191 \\\hline
                6 & \textbf{0.037 $\pm$ 0.310} & 0.108 $\pm$ 0.156 \\\hline
                8 & \textbf{0.023 $\pm$ 0.231} & 0.182 $\pm$ 0.166 \\\hline
                10 & \textbf{0.030 $\pm$ 0.272} &  0.417 $\pm$ 0.286\\\hline
                \end{tabular}

        \caption{Mean time for each planning step for RDF and ARMTD in 2D planning experiments with 5.0s time limit $\downarrow$}
        \label{tab:time2d}
\end{table}

\begin{table}[t]
        \centering
        \begin{tabular}{ | c | c | c | c | c |}
                \hline 
                \# Obstacles & \multicolumn{4}{c|}{5} \\ \hline
                time limit (s) & 5.0 & 0.3 & 0.033 & 0.025 \\\hline \hline
                RDF & 401 & 398 & 384 & 377  \\ \hline
                ARMTD & \textbf{487} & \textbf{479} & 0 & 0 \\ \hline 
                SDF & 421 & 420 & 339 & 304 \\ \hline 
                CHOMP & 401 & 392 & \textbf{386} & \textbf{386} \\ \hline
        \end{tabular}
        \caption{\# of successes for RDF, ARMTD, SDF, and CHOMP in 3D 7 link planning experiment with 5 obstacles $\uparrow$}
        \label{tab:success3d5obs}
\end{table}

\begin{table}[t]
        \centering
            \begin{tabular}{ | c | c | c | c | c |}
                \hline 
                \# Obstacles & \multicolumn{4}{c|}{10} \\ \hline
                time limit (s) & 5.0 & 0.3 & 0.033 & 0.025 \\\hline \hline
                RDF & 321 & 319 & \textbf{306} & \textbf{293}  \\ \hline
                ARMTD & \textbf{466} & 301 & 0 & 0 \\ \hline 
                SDF & 357 & \textbf{352} & 215 & 15 \\ \hline 
                CHOMP & 312 & 301 & 293 & \textbf{293} \\ \hline
                \end{tabular}
        \caption{\# of successes for RDF, ARMTD, SDF, and CHOMP in 3D 7 Link planning experiment with 10 obstacles $\uparrow$}
        \label{tab:success3d10obs}
\end{table}

\begin{table}[t]
        \centering
            \begin{tabular}{ | c | c | c | c | c |}
                \hline 
                \# Obstacles & \multicolumn{4}{c|}{5} \\ \hline
                time limit (s) & 5.0 & 0.3 & 0.033 & 0.025 \\\hline \hline
                RDF &  \textbf{0} & \textbf{0} & \textbf{0} & \textbf{0}  \\ \hline
                ARMTD & \textbf{0} & \textbf{0} & \textbf{0} & \textbf{0} \\ \hline 
                SDF & \textbf{0} & \textbf{0} & \textbf{0} & \textbf{0} \\ \hline 
                CHOMP & 99 & 108 & 114 & 114 \\ \hline
                \end{tabular}
        \caption{\# of collisions for RDF, ARMTD, SDF, and CHOMP in 3D 7 link planning experiment with 5 obstacles $\downarrow$}
        \label{tab:collision3d5obs}
\end{table}

\begin{table}[t]
        \centering
            \begin{tabular}{ | c | c | c | c | c |}
                \hline 
                \# Obstacles & \multicolumn{4}{c|}{10} \\ \hline
                time limit (s) & 5.0 & 0.3 & 0.033 & 0.025 \\\hline \hline
                RDF & \textbf{0} & \textbf{0} & \textbf{0} & \textbf{0} \\ \hline
                ARMTD & \textbf{0 }& \textbf{0} & \textbf{0} & \textbf{0} \\ \hline 
                SDF & \textbf{0} & \textbf{0} & \textbf{0} & \textbf{0} \\ \hline 
                CHOMP & 188 & 199 & 207 & 207 \\ \hline
                \end{tabular}
        \caption{\# of collisions for RDF, ARMTD, SDF, and CHOMP in 3D 7 link planning experiment with 10 obstacles $\downarrow$}
        \label{tab:collision3d10obs}
\end{table}


\begin{table}[!htbp]
        \centering
        
        \begin{tabular}{ | c | c | c |}
                \hline 
                \# Obstacles & 5 & 10 \\ \hline \hline
 
                RDF & \textbf{0.023 $\pm$ 0.183} & \textbf{0.037 $\pm$ 0.296}  \\ \hline
                ARMTD & 0.172 $\pm$ 0.131 & 0.289 $\pm$  0.358 \\ \hline 
                SDF & 0.038 $\pm$ 0.212 & 0.064 $\pm$ 0.354 \\ \hline 
                CHOMP & 0.086 $\pm$ 0.138 & 0.078 $\pm$ 0.211 \\ \hline
                \end{tabular}

        \caption{Mean time for each planning step for RDF, ARMTD, SDF, and CHOMP in the 3D 7 link planning experiment with 5.0s time limit $\downarrow$}
        \label{tab:time3d}
\end{table}

\section{Conclusion}
This paper introduces the Reachability-based signed Distance Function (\methodname{}), a neural implicit representation useful for safe robot motion planning.
We demonstrate \methodname{}'s viability as a collision-avoidance constraint within a real-time receding-horizon trajectory planning framework.
We show that \methodname{}'s distance computation is fast, accurate, and unlike model-based methods like ARMTD, scales linearly with the dimension of the system.
\methodname{} is also able to solve challenging motion planning tasks for high DOF robotic arms under limited planning horizons.

Future work will aim to improve \methodname{}'s properties. 
First, bounding the network's approximation error will ensure that \methodname{} can be used with guarantees on safety.
Second, better architecture design and additional implicit regularization will allow a single \methodname{} model to generalize to multiple robot morphologies.
Finally, we will aim to extend \methodname{} to handle dynamic obstacles.

\renewcommand{\bibfont}{\normalfont\footnotesize}
{\renewcommand{\markboth}[2]{}
\printbibliography}

@INPROCEEDINGS{cameron1986mtd,
  author={Cameron, S. and Culley, R.},
  booktitle={Proceedings. 1986 IEEE International Conference on Robotics and Automation}, 
  title={Determining the minimum translational distance between two convex polyhedra}, 
  year={1986},
  volume={3},
  number={},
  pages={591-596},
  doi={10.1109/ROBOT.1986.1087645}
}

@inproceedings{agarwal2000convex,
	abstract = {Let A and B be two convex polytopes in ℝ3 with m and n facets, respectively. The penetration depth of A and B, denoted as $\pi$(A,B), is the minimum distance by which A has to be translated so that A and B do not intersect. We present a randomized algorithm that computes $\pi$(A,B) in O(m3/4+$\epsilon$n3/4+$\epsilon$ + m1+$\epsilon$ + n1+$\epsilon$) expected time, for any constant $\epsilon$ > 0. It also computes a vector t such that {\textbardbl}t{\textbardbl} = $\pi$(A, B) and int(A + t) ∩ B = 0. We show that if the Minkowski sum B⊕(-A) has K facets, then the expected running time of our algorithm is O (K1/2+$\epsilon$m1/4n1/4 + m1+$\epsilon$ + n1+$\epsilon$), for any $\epsilon$ > 0. We also present an approximation algorithm for computing $\pi$(A, B). For any $\delta$ > 0, we can compute, in time O(m + n+ (log2(m + n))/$\delta$), a vector t such that {\textbardbl}t{\textbardbl} ≤ (1 + $\delta$)$\pi$(A, B) and int(A +t) ∩ B = 0. Our result also gives a $\delta$-approximation algorithm for computing the width of A in time O(n + (log2n)/$\delta$), which is simpler and slightly faster than the recent algorithm by Chan [4].},
	address = {Berlin, Heidelberg},
	author = {Agarwal, Pankaj K. and Guibas, Leonidas J. and Har-Peled, Sariel and Rabinovitch, Alexander and Sharir, Micha},
	booktitle = {Algorithm Theory - SWAT 2000},
	isbn = {978-3-540-44985-0},
	pages = {328--338},
	publisher = {Springer Berlin Heidelberg},
	title = {Computing the Penetration Depth of Two Convex Polytopes in 3D},
	year = {2000}}

@inproceedings{Murray2016RobotMP,
  title={Robot Motion Planning on a Chip},
  author={Sean Murray and Will Floyd-Jones and Ying Qi and Daniel J. Sorin and George Dimitri Konidaris},
  booktitle={Robotics: Science and Systems},
  year={2016}
}

@misc{armour2023arxiv,
  doi = {10.48550/ARXIV.2301.13308},
  url = {https://arxiv.org/abs/2301.13308},
  author = {Michaux, Jonathan and Holmes, Patrick and Zhang, Bohao and Chen, Che and Wang, Baiyue and Sahgal, Shrey and Zhang, Tiancheng and Dey, Sidhartha and Kousik, Shreyas and Vasudevan, Ram},
  keywords = {Robotics (cs.RO), Systems and Control (eess.SY), Optimization and Control (math.OC), FOS: Computer and information sciences, FOS: Computer and information sciences, FOS: Electrical engineering, electronic engineering, information engineering, FOS: Electrical engineering, electronic engineering, information engineering, FOS: Mathematics, FOS: Mathematics},
  title = {Can't Touch This: Real-Time, Safe Motion Planning and Control for Manipulators Under Uncertainty},
  publisher = {arXiv},
  year = {2023},
  copyright = {Creative Commons Attribution 4.0 International}
}

@book{kinova-user-guide,
  title={User Guide - KINOVA Gen3 Ultra lightweight robot},
  author={Kinova},
  year={2022}
}

@article{paszke2017automatic,
  title={Automatic differentiation in PyTorch},
  author={Paszke, Adam and Gross, Sam and Chintala, Soumith and Chanan, Gregory and Yang, Edward and DeVito, Zachary and Lin, Zeming and Desmaison, Alban and Antiga, Luca and Lerer, Adam},
  year={2017}
}

@misc{gurobi,
  author = {{Gurobi Optimization, LLC}},
  title = {{Gurobi Optimizer Reference Manual}},
  year = 2023,
  url = "https://www.gurobi.com"
}

@article{huhneural,
  title={Neural Cost-to-Go Function Representation for High Dimensional Motion Planning},
  author={Huh, Jinwook and Lee, Daniel D and Isler, Volkan}
}

@article{driess2022learning,
  title={Learning multi-object dynamics with compositional neural radiance fields},
  author={Driess, Danny and Huang, Zhiao and Li, Yunzhu and Tedrake, Russ and Toussaint, Marc},
  journal={arXiv preprint arXiv:2202.11855},
  year={2022}
}

@inproceedings{yen2022nerf,
  title={Nerf-supervision: Learning dense object descriptors from neural radiance fields},
  author={Yen-Chen, Lin and Florence, Pete and Barron, Jonathan T and Lin, Tsung-Yi and Rodriguez, Alberto and Isola, Phillip},
  booktitle={2022 International Conference on Robotics and Automation (ICRA)},
  pages={6496--6503},
  year={2022},
  organization={IEEE}
}

@article{ichnowski2021dex,
  title={Dex-nerf: Using a neural radiance field to grasp transparent objects},
  author={Ichnowski, Jeffrey and Avigal, Yahav and Kerr, Justin and Goldberg, Ken},
  journal={arXiv preprint arXiv:2110.14217},
  year={2021}
}

@article{adamkiewicz2022vision,
  title={Vision-only robot navigation in a neural radiance world},
  author={Adamkiewicz, Michal and Chen, Timothy and Caccavale, Adam and Gardner, Rachel and Culbertson, Preston and Bohg, Jeannette and Schwager, Mac},
  journal={IEEE Robotics and Automation Letters},
  volume={7},
  number={2},
  pages={4606--4613},
  year={2022},
  publisher={IEEE}
}

@article{camps2022learning,
  title={Learning Deep SDF Maps Online for Robot Navigation and Exploration},
  author={Camps, Gadiel Sznaier and Dyro, Robert and Pavone, Marco and Schwager, Mac},
  journal={arXiv preprint arXiv:2207.10782},
  year={2022}
}

@inproceedings{gropp2020implicit,
  title={Implicit geometric regularization for learning shapes},
  author={Gropp, Amos and Yariv, Lior and Haim, Niv and Atzmon, Matan and Lipman, Yaron},
  booktitle={Proceedings of the 37th International Conference on Machine Learning},
  pages={3789--3799},
  year={2020}
}

@inproceedings{koptev2022implicit,
  title={Implicit Distance Functions: Learning and Applications in Control},
  author={Koptev, Mikhail and Figueroa Fernandez, Nadia Barbara and Billard, Aude},
  booktitle={International Conference on Robotics and Automation (ICRA 2022)},
  number={POST\_TALK},
  year={2022}
}

@misc{ortiz_isdf_2022,
	title = {{iSDF}: {Real}-{Time} {Neural} {Signed} {Distance} {Fields} for {Robot} {Perception}},
	shorttitle = {{iSDF}},
	url = {http://arxiv.org/abs/2204.02296},
	doi = {10.48550/arXiv.2204.02296},
	abstract = {We present iSDF, a continual learning system for real-time signed distance field (SDF) reconstruction. Given a stream of posed depth images from a moving camera, it trains a randomly initialised neural network to map input 3D coordinate to approximate signed distance. The model is self-supervised by minimising a loss that bounds the predicted signed distance using the distance to the closest sampled point in a batch of query points that are actively sampled. In contrast to prior work based on voxel grids, our neural method is able to provide adaptive levels of detail with plausible filling in of partially observed regions and denoising of observations, all while having a more compact representation. In evaluations against alternative methods on real and synthetic datasets of indoor environments, we find that iSDF produces more accurate reconstructions, and better approximations of collision costs and gradients useful for downstream planners in domains from navigation to manipulation. Code and video results can be found at our project page: https://joeaortiz.github.io/iSDF/ .},
	urldate = {2022-09-19},
	publisher = {arXiv},
	author = {Ortiz, Joseph and Clegg, Alexander and Dong, Jing and Sucar, Edgar and Novotny, David and Zollhoefer, Michael and Mukadam, Mustafa},
	month = may,
	year = {2022},
	note = {arXiv:2204.02296 [cs]},
	keywords = {Computer Science - Robotics, Computer Science - Computer Vision and Pattern Recognition},
	file = {arXiv Fulltext PDF:/Users/kronos/Zotero/storage/KPDZ7DXZ/Ortiz et al. - 2022 - iSDF Real-Time Neural Signed Distance Fields for .pdf:application/pdf;arXiv.org Snapshot:/Users/kronos/Zotero/storage/X8FJKKUL/2204.html:text/html},
}

@misc{park_deepsdf_2019,
	title = {{DeepSDF}: {Learning} {Continuous} {Signed} {Distance} {Functions} for {Shape} {Representation}},
	shorttitle = {{DeepSDF}},
	url = {http://arxiv.org/abs/1901.05103},
	doi = {10.48550/arXiv.1901.05103},
	abstract = {Computer graphics, 3D computer vision and robotics communities have produced multiple approaches to representing 3D geometry for rendering and reconstruction. These provide trade-offs across fidelity, efficiency and compression capabilities. In this work, we introduce DeepSDF, a learned continuous Signed Distance Function (SDF) representation of a class of shapes that enables high quality shape representation, interpolation and completion from partial and noisy 3D input data. DeepSDF, like its classical counterpart, represents a shape's surface by a continuous volumetric field: the magnitude of a point in the field represents the distance to the surface boundary and the sign indicates whether the region is inside (-) or outside (+) of the shape, hence our representation implicitly encodes a shape's boundary as the zero-level-set of the learned function while explicitly representing the classification of space as being part of the shapes interior or not. While classical SDF's both in analytical or discretized voxel form typically represent the surface of a single shape, DeepSDF can represent an entire class of shapes. Furthermore, we show state-of-the-art performance for learned 3D shape representation and completion while reducing the model size by an order of magnitude compared with previous work.},
	urldate = {2022-09-19},
	publisher = {arXiv},
	author = {Park, Jeong Joon and Florence, Peter and Straub, Julian and Newcombe, Richard and Lovegrove, Steven},
	month = jan,
	year = {2019},
	note = {arXiv:1901.05103 [cs]},
	keywords = {Computer Science - Computer Vision and Pattern Recognition},
	file = {arXiv Fulltext PDF:/Users/kronos/Zotero/storage/3AC8D2IS/Park et al. - 2019 - DeepSDF Learning Continuous Signed Distance Funct.pdf:application/pdf;arXiv.org Snapshot:/Users/kronos/Zotero/storage/8M5X7HV6/1901.html:text/html},
}

@article{dapogny2012computation,
  title={Computation of the signed distance function to a discrete contour on adapted triangulation},
  author={Dapogny, Charles and Frey, Pascal},
  journal={Calcolo},
  volume={49},
  pages={193--219},
  year={2012},
  publisher={Springer}
}

@ARTICLE{perrin2012svfootstep,
  author={Perrin, Nicolas and Stasse, Olivier and Baudouin, Léo and Lamiraux, Florent and Yoshida, Eiichi},
  journal={IEEE Transactions on Robotics}, 
  title={Fast Humanoid Robot Collision-Free Footstep Planning Using Swept Volume Approximations}, 
  year={2012},
  volume={28},
  number={2},
  pages={427-439},
  doi={10.1109/TRO.2011.2172152}}

@article{abdel-malek_swept_2006,
	title = {Swept volumes: fundation, perspectives, and applications},
	volume = {12},
	issn = {0218-6543},
	shorttitle = {Swept volumes},
	url = {https://www.worldscientific.com/doi/10.1142/S0218654306000858},
	doi = {10.1142/S0218654306000858},
	abstract = {A survey of swept volume concepts and methods is presented. It consolidates and relates seemingly different terminology and methods. The methods are divided into five major categories: method with envelope theory, method with sweep differential equation (SDE)/sweep envelope differential equation (SEDE), method with Jacobian rank deficiency, Voxel-based method, and free-form solids' sweeping methods. Commentary is provided on three fronts, concerning the advantages and pitfalls of individual methods, the different classes of methods, and the field of swept volumes as a whole. The characteristics of the most significant methods are summarized. The applicability of this seemingly simple formulation to the fields of solid modeling, robotics, manufacturing automation, and visualization is demonstrated through results reported by the authors, each in their own field.},
	number = {01},
	urldate = {2022-09-19},
	journal = {International Journal of Shape Modeling},
	author = {Abdel-Malek, Karim and Yang, Jingzhou and Blackmore, Denis and Joy, Ken},
	month = jun,
	year = {2006},
	note = {Publisher: World Scientific Publishing Co.},
	keywords = {NC verification, solid modeling, Swept volumes, workspace analysis},
	pages = {87--127},
}

@article{blackmore_sweep-envelope_1997,
	title = {The sweep-envelope differential equation algorithm and its application to {NC} machining verification},
	volume = {29},
	issn = {0010-4485},
	url = {https://www.sciencedirect.com/science/article/pii/S0010448596001017},
	doi = {10.1016/S0010-4485(96)00101-7},
	abstract = {A new method, called the sweep-envelope differential equation, for characterizing swept volume boundaries is introduced. This method is used as the theoretical foundation of an algorithm for computing swept volumes using the trajectories of the sweep-envelope differential equation which start at the initial grazing points of the moving object. The major advantages of this algorithm are: (1) the grazing point set need essentially only be computed at the initial position of the object—the remaining grazing points are generated by the flow of the sweep-envelope equation—so the computation complexity is drastically reduced; and (2) it provides automatic connectivity for computed boundary points that facilitates integration with standard algorithms andcad software for visual realization and Boolean operations. Examples are presented that illustrate successful integration of a prototype program (based on the algorithm) with commercial NC verification software.},
	language = {en},
	number = {9},
	urldate = {2022-09-19},
	journal = {Computer-Aided Design},
	author = {Blackmore, D and Leu, MC and Wang, L. P.},
	month = sep,
	year = {1997},
	keywords = {algorithm, geometric modeling, numerically controlled (NC) machining},
	pages = {629--637},
	file = {ScienceDirect Full Text PDF:/Users/kronos/Zotero/storage/S8CEPX8L/Blackmore et al. - 1997 - The sweep-envelope differential equation algorithm.pdf:application/pdf;ScienceDirect Snapshot:/Users/kronos/Zotero/storage/AI37XDK3/S0010448596001017.html:text/html},
}

@article{blackmore_analysis_1994,
	series = {Special {Issue}: {Mathematical} methods for {CAD}},
	title = {Analysis and modelling of deformed swept volumes},
	volume = {26},
	issn = {0010-4485},
	url = {https://www.sciencedirect.com/science/article/pii/0010448594900779},
	doi = {10.1016/0010-4485(94)90077-9},
	abstract = {The sweep differential equation approach and the boundary-flow method developed for the analysis and representation of swept volumes are extended to include objects experiencing deformation. It is found that the theoretical framework can be generalized quite naturally to deformed swept volumes by the enlargement of the Lie group structure of the sweeps. All the usual results, including the boundary-flow formula, are shown to have extensions for swept volumes with deformation. Several special classes of deformation are identified, and their particular properties are studied insofar as they pertain to swept volumes. A program for obtaining deformed swept volumes of planar polygons is described, and is then applied to several examples to demonstrate its effectiveness.},
	language = {en},
	number = {4},
	urldate = {2022-09-19},
	journal = {Computer-Aided Design},
	author = {Blackmore, Denis and Leu, Ming C. and Shih, Frank},
	month = apr,
	year = {1994},
	keywords = {boundary-flow formulae, Lie groups, linear deformations, nonlinear deformations, similarity deformations, sweep differential equations},
	pages = {315--326},
	file = {ScienceDirect Full Text PDF:/Users/kronos/Zotero/storage/XIJ4IMYM/Blackmore et al. - 1994 - Analysis and modelling of deformed swept volumes.pdf:application/pdf;ScienceDirect Snapshot:/Users/kronos/Zotero/storage/9REKTBLD/0010448594900779.html:text/html},
}

@article{blackmore_analysis_1992,
	title = {Analysis of {Swept} {Volume} via {Lie} {Groups} and {Differential} {Equations}},
	volume = {11},
	issn = {0278-3649},
	url = {https://doi.org/10.1177/027836499201100602},
	doi = {10.1177/027836499201100602},
	abstract = {The development of useful mathematical techniques for an alyzing swept volumes, together with efficient means of im plementing these methods to produce serviceable models, has important applications to numerically controlled (NC) machin ing, robotics, and motion planning, as well as other areas of automation. In this article a novel approach to swept volumes is delineated?one that fully exploits the intrinsic geometric and group theoretical structure of Euclidean motions in or der to formulate the problem in the context of Lie groups and differential equations.Precise definitions of sweep and swept volume are given that lead naturally to an associated ordinary differential equation. This sweep differential equation is then shown to be related to the Lie group structure of Euclidean motions and to generate trajectories that completely determine the geometry of swept volumes.It is demonstrated that the notion of a sweep differential equation leads to criteria that provide useful insights concern ing the geometric and topologic features of swept volumes. Several new results characterizing swept volumes are obtained. For example, a number of simple properties that guarantee that the swept volume is a Cartesian product of elementary mani folds are identified. The criteria obtained may be readily tested with the aid of a computer.},
	language = {en},
	number = {6},
	urldate = {2022-09-19},
	journal = {The International Journal of Robotics Research},
	author = {Blackmore, Denis and Leu, M.C.},
	month = dec,
	year = {1992},
	note = {Publisher: SAGE Publications Ltd STM},
	pages = {516--537},
	file = {SAGE PDF Full Text:/Users/kronos/Zotero/storage/VNLMS5IL/Blackmore and Leu - 1992 - Analysis of Swept Volume via Lie Groups and Differ.pdf:application/pdf},
}

@article{Campen2010PolygonalBE,
  title={Polygonal Boundary Evaluation of Minkowski Sums and Swept Volumes},
  author={Marcel Campen and Leif P. Kobbelt},
  journal={Computer Graphics Forum},
  year={2010},
  volume={29}
}

@article{Kim2003FastSV,
  title={Fast swept volume approximation of complex polyhedral models},
  author={Young J. Kim and Gokul Varadhan and Ming C. Lin and Dinesh Manocha},
  journal={Comput. Aided Des.},
  year={2003},
  volume={36},
  pages={1013-1027}
}

@inproceedings{Gaschler2013RobotTA,
  title={Robot task and motion planning with sets of convex polyhedra},
  author={Andre Gaschler and Ronald P. A. Petrick and Torsten Kr{\"o}ger and Oussama Khatib and Alois Knoll},
  booktitle={Robotics: Science and Systems Conference},
  year={2013}
}

@ARTICLE{spatial_planningPerez,

  author={Lozano-Perez},

  journal={IEEE Transactions on Computers}, 

  title={Spatial Planning: A Configuration Space Approach}, 

  year={1983},

  volume={C-32},

  number={2},

  pages={108-120},

  doi={10.1109/TC.1983.1676196}}

@article{koptev_implicit_nodate,
	title = {Implicit {Distance} {Functions}: {Learning} and {Applications} in {Control}},
	abstract = {This paper describes a novel approach to learn an implicit, differentiable distance function for arbitrary configurations of a robotic manipulator used for reactive control. By exploiting GPU processing, we efficiently query the learned collision representation and obtain an implicit distance between the robot and the environment. The differentiable nature of the learned function allows for calculating valid gradients wrt. any robot configuration, providing a repulsive vector field in joint space that can be injected in various control methods to improve collision avoidance. We present preliminary results on solving collision avoidance for a 7DoF robot with a reactive inverse kinematics solution, as well as improving performance of a sampling-based model-predictive controller.},
	language = {en},
	author = {Koptev, Mikhail and Figueroa, Nadia and Billard, Aude},
	pages = {3},
	file = {Koptev et al. - Implicit Distance Functions Learning and Applicat.pdf:/Users/kronos/Zotero/storage/DZVTTA3N/Koptev et al. - Implicit Distance Functions Learning and Applicat.pdf:application/pdf},
}

@inproceedings{blackmore_differential_1990,
	title = {A differential equation approach to swept volumes},
	doi = {10.1109/CIM.1990.128088},
	abstract = {An approach to the analysis of swept volumes is introduced. It is shown that every smooth Euclidean motion or sweep, can be identified with a first-order, linear, ordinary differential equation. This sweep differential equation provides useful insights into the topological and geometrical nature of the swept volume of an object. A certain class, autonomous sweeps, is identified by the form of the associated differential equation, and several properties of the swept volumes of the members of this class are analyzed. The results are applied to generate swept volumes for a number of objects. Implementation of the sweep differential equation approach with computer-based numerical and graphical methods is also discussed.{\textless}{\textgreater}},
	booktitle = {[1990] {Proceedings}. {Rensselaer}'s {Second} {International} {Conference} on {Computer} {Integrated} {Manufacturing}},
	author = {Blackmore, D. and Leu, M.C.},
	month = may,
	year = {1990},
	keywords = {Robotics and automation, Motion planning, Algorithm design and analysis, Differential equations, Machining, Manufacturing automation, Motion analysis, Orbital robotics, Solid modeling, Topology},
	pages = {143--149},
	file = {IEEE Xplore Full Text PDF:/Users/kronos/Zotero/storage/EG7XCQAF/Blackmore and Leu - 1990 - A differential equation approach to swept volumes.pdf:application/pdf},
}

@article{blackmore_trimming_1999,
	title = {Trimming swept volumes},
	volume = {31},
	issn = {0010-4485},
	url = {https://www.sciencedirect.com/science/article/pii/S0010448599000172},
	doi = {10.1016/S0010-4485(99)00017-2},
	abstract = {The trimming problem for swept volumes — concerning the excision of points ostensibly on the boundary that actually lie in the swept volume interior — is investigated in detail. Building upon several techniques that have appeared in the literature, efficient methods for both local and global trimming of swept volume are developed. These methods are shown to be computationally cost effective when combined with the sweep-envelope differential equation algorithm for the approximate calculation and graphical rendering of swept volumes for quite general objects and sweeps. Examples are presented to demonstrate the efficacy of the trimming strategies.},
	language = {en},
	number = {3},
	urldate = {2022-09-21},
	journal = {Computer-Aided Design},
	author = {Blackmore, Denis and Samulyak, Roman and Leu, Ming C.},
	month = mar,
	year = {1999},
	keywords = {Boundary models, Self-intersections, Sweeping},
	pages = {215--223},
	file = {ScienceDirect Full Text PDF:/Users/kronos/Zotero/storage/XJXBUQ57/Blackmore et al. - 1999 - Trimming swept volumes.pdf:application/pdf;ScienceDirect Snapshot:/Users/kronos/Zotero/storage/HNDT4XKS/S0010448599000172.html:text/html},
}

@misc{deep_lipschitz_sdf,
	title = {Learning {Smooth} {Neural} {Functions} via {Lipschitz} {Regularization} - {Google} {Search}},
	url = {https://www.google.com/search?client=firefox-b-1-d&q=Learning+Smooth+Neural+Functions+via+Lipschitz+Regularization},
	urldate = {2022-10-07},
	file = {Learning Smooth Neural Functions via Lipschitz Regularization - Google Search:/Users/kronos/Zotero/storage/4G89TZ9G/search.html:text/html},
}

@inproceedings{kousik2019_quad_RTD,
  title={\href{https://asmedigitalcollection.asme.org/DSCC/proceedings-abstract/DSCC2019/59162/V003T19A010/1070634?casa_token=13yP3MYNpJYAAAAA:bKxMXo9trU29nvKjO6YuYCytv4XGbUm74BrIs-Z3wpkAalZcSi3xsc9YGHD1d1FLzT4gQg}{Safe, aggressive quadrotor flight via reachability-based trajectory design}},
  author={Kousik, Shreyas and Holmes, Patrick and Vasudevan, Ram},
  booktitle={Dynamic Systems and Control Conference},
  volume={59162},
  pages={V003T19A010},
  year={2019},
  organization={American Society of Mechanical Engineers}
}

@inproceedings{holmes2020armtd,
    AUTHOR    = {Patrick Holmes AND Shreyas Kousik AND Bohao Zhang AND Daphna Raz AND Corina Barbalata AND Matthew Johnson Roberson AND Ram Vasudevan}, 
    TITLE     = {{ Reachable Sets for Safe, Real-Time Manipulator Trajectory Design}}, 
    BOOKTITLE = {Proceedings of Robotics: Science and Systems}, 
    YEAR      = {2020}, 
    ADDRESS   = {Corvalis, Oregon, USA}, 
    MONTH     = {7}, 
    DOI       = {10.15607/RSS.2020.XVI.100} 
}

@article{liu2022refine,
  title={REFINE: Reachability-based Trajectory Design using Robust Feedback Linearization and Zonotopes},
  author={Liu, Jinsun and Shao, Yifei and Lymburner, Lucas and Qin, Hansen and Kaushik, Vishrut and Trang, Lena and Wang, Ruiyang and Ivanovic, Vladimir and Tseng, H Eric and Vasudevan, Ram},
  journal={arXiv preprint arXiv:2211.11997},
  year={2022}
}

@inproceedings{spong2005textbook,
  title={Robot Modeling and Control},
  author={M. Spong and S. Hutchinson and M. Vidyasagar},
  year={2005}
}

@inproceedings{guibas2003zonotopes,
  title={Zonotopes as bounding volumes},
  author={Guibas, Leonidas J and Nguyen, An and Zhang, Li},
  booktitle={Proceedings of the fourteenth annual ACM-SIAM symposium on Discrete algorithms},
  pages={803--812},
  year={2003},
  organization={Society for Industrial and Applied Mathematics}
}

@INPROCEEDINGS{vaskov2019towards,
    AUTHOR    = {Sean Vaskov AND Shreyas Kousik AND Hannah Larson AND Fan Bu AND James Robert Ward AND Stewart Worrall AND Matthew  Johnson-Roberson AND Ram Vasudevan}, 
    TITLE     = {Towards Provably Not-At-Fault Control of Autonomous Robots in Arbitrary Dynamic Environments}, 
    BOOKTITLE = {Proceedings of Robotics: Science and Systems}, 
    YEAR      = {2019}, 
    ADDRESS   = {FreiburgimBreisgau, Germany}, 
    MONTH     = {6}, 
    DOI       = {10.15607/RSS.2019.XV.051} 
}

@article{kochdumper2020sparse,
  title={Sparse polynomial zonotopes: A novel set representation for reachability analysis},
  author={Kochdumper, Niklas and Althoff, Matthias},
  journal={IEEE Transactions on Automatic Control},
  volume={66},
  number={9},
  pages={4043--4058},
  year={2020},
  publisher={IEEE}
}

@article{zucker2013chomp,
  title={Chomp: Covariant hamiltonian optimization for motion planning},
  author={Zucker, Matt and Ratliff, Nathan and Dragan, Anca D and Pivtoraiko, Mihail and Klingensmith, Matthew and Dellin, Christopher M and Bagnell, J Andrew and Srinivasa, Siddhartha S},
  journal={The International Journal of Robotics Research},
  volume={32},
  number={9-10},
  pages={1164--1193},
  year={2013},
  publisher={SAGE Publications Sage UK: London, England}
}

@article{schulman2014trajopt,
  title={Motion planning with sequential convex optimization and convex collision checking},
  author={Schulman, John and Duan, Yan and Ho, Jonathan and Lee, Alex and Awwal, Ibrahim and Bradlow, Henry and Pan, Jia and Patil, Sachin and Goldberg, Ken and Abbeel, Pieter},
  journal={The International Journal of Robotics Research},
  volume={33},
  number={9},
  pages={1251--1270},
  year={2014},
  publisher={SAGE Publications Sage UK: London, England}
}

@article{kousik2020bridging,
  title={Bridging the gap between safety and real-time performance in receding-horizon trajectory design for mobile robots},
  author={Kousik, Shreyas and Vaskov, Sean and Bu, Fan and Johnson-Roberson, Matthew and Vasudevan, Ram},
  journal={The International Journal of Robotics Research},
  volume={39},
  number={12},
  pages={1419--1469},
  year={2020},
  publisher={SAGE Publications Sage UK: London, England}
}

@book{boyd2004convex,
  title={Convex optimization},
  author={Boyd, Stephen and Vandenberghe, Lieven},
  year={2004},
  publisher={Cambridge university press}
}

@article{ipopt-cite,
author = {Wächter, Andreas and Biegler, Lorenz},
year = {2006},
month = {03},
pages = {25-57},
title = {On the Implementation of an Interior-Point Filter Line-Search Algorithm for Large-Scale Nonlinear Programming},
volume = {106},
journal = {Mathematical programming},
doi = {10.1007/s10107-004-0559-y}
}

@book{z-lop-95,
  added-at = {2009-12-02T16:16:33.000+0100},
  address = {New York},
  author = {Ziegler, Günter M.},
  biburl = {https://www.bibsonomy.org/bibtex/288821be86e9dcb9618bf6542e8663177/cgray},
  booktitle = {Graduate texts in mathematics, 152},
  description = {Lectures on Polytopes},
  interhash = {3460581be9df535ef707b50cb00bada7},
  intrahash = {88821be86e9dcb9618bf6542e8663177},
  issn = {0387943293  9780387943299  3540943293  9783540943297  038794365X  9780387943657  354094365X  9783540943655},
  keywords = {imported nearest-neighbor polytopes},
  % pages = {--},
  publisher = {Springer-Verlag},
  refid = {30594158},
  timestamp = {2009-12-02T16:16:33.000+0100},
  title = {Lectures on polytopes},
  url = {http://www.worldcat.org/search?qt=worldcat_org_all&q=9780387943657},
  year = 1995
}

@book{evans2022partial,
  title={Partial differential equations},
  author={Evans, Lawrence C},
  volume={19},
  year={2022},
  publisher={American Mathematical Society}
}

\newpage
\appendices
\section{Proof of Theorem \ref{thm:zono_sdf}}
\label{app:theorem}

Before proving the required result, we prove several intermediate results. 
\subsection{Signed Distance Between Zonotopes}
\label{subsec:sdf_zono}

Here we derive the signed distance between two zonotopes.
To proceed we first require a condition for determining if two zonotopes are in collision:
\begin{lem}\label{lem:zono_intersection}
Let $Z_1 = (c_1, G_1)$ and $Z_2 = (c_2, G_2)$ be zonotopes. Then $Z_1 \cap Z_2 \neq \emptyset$ if and only if $c_1 \in Z_{2,\text{buf}} = (c_2, G_1 \cup G_2)$.
\end{lem}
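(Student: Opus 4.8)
The plan is to prove the equivalence by unfolding the zonotope definition \eqref{eq:zonotope} on both sides and checking that the two conditions reduce to solvability of the same linear feasibility problem, up to a harmless sign flip. Writing $G_1 \in \R^{n \times p_1}$, $G_2 \in \R^{n \times p_2}$ and interpreting $G_1 \cup G_2$ as the column concatenation $[\,G_1\ G_2\,] \in \R^{n \times (p_1+p_2)}$, we have $Z_1 = \{c_1 + G_1\beta_1 : \|\beta_1\|_\infty \le 1\}$, $Z_2 = \{c_2 + G_2\beta_2 : \|\beta_2\|_\infty \le 1\}$, and $Z_{2,\text{buf}} = \{c_2 + G_1\gamma_1 + G_2\gamma_2 : \|\gamma_1\|_\infty \le 1,\ \|\gamma_2\|_\infty \le 1\}$.

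First I would rewrite the left-hand side: $Z_1 \cap Z_2 \neq \emptyset$ holds if and only if there exist $\beta_1,\beta_2$ with $\|\beta_1\|_\infty\le 1$, $\|\beta_2\|_\infty\le 1$ such that $c_1 + G_1\beta_1 = c_2 + G_2\beta_2$, i.e.\ $c_1 - c_2 = G_2\beta_2 - G_1\beta_1$. Then I would rewrite the right-hand side: $c_1 \in Z_{2,\text{buf}}$ holds if and only if there exist $\gamma_1,\gamma_2$ with $\|\gamma_1\|_\infty\le 1$, $\|\gamma_2\|_\infty\le 1$ such that $c_1 - c_2 = G_1\gamma_1 + G_2\gamma_2$.

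The bridge between the two is that the $\infty$-norm unit ball is symmetric about the origin: $\|\beta_1\|_\infty\le 1 \iff \|-\beta_1\|_\infty\le 1$. Thus any solution $(\beta_1,\beta_2)$ of the first system yields a solution $(\gamma_1,\gamma_2) = (-\beta_1,\beta_2)$ of the second, and conversely $(\beta_1,\beta_2) = (-\gamma_1,\gamma_2)$ recovers a solution of the first from any solution of the second; hence the two systems are simultaneously feasible, which is precisely the claimed equivalence.

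I do not anticipate a real obstacle here — the lemma is essentially a bookkeeping identity. The only things to be careful about are (i) fixing the convention that $G_1 \cup G_2$ denotes concatenation of generator matrices rather than a literal set union, consistent with its use in Algorithm \ref{alg:rdf}, and (ii) tracking the sign so that the generators of $Z_1$ are absorbed into $Z_{2,\text{buf}}$ with the correct orientation. This lemma then serves as the first ingredient in the derivation of the signed distance between zonotopes in Section \ref{subsec:sdf_zono} and, ultimately, in the proof of Theorem \ref{thm:zono_sdf}.
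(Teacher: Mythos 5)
Your proof is correct and follows essentially the same route as the paper: unfold both membership conditions into the linear feasibility problem $c_1 - c_2 = G_2\beta_2 - G_1\beta_1$ and identify it with membership of $c_1$ in the buffered zonotope $(c_2, G_1\cup G_2)$. You are in fact slightly more careful than the paper in making explicit the sign flip $\gamma_1 = -\beta_1$ justified by the symmetry of the $\ell_\infty$ unit ball, a step the paper's proof glosses over (and the paper's final line contains a typo, writing $(c_1, G_1\cup G_2)$ where $(c_2, G_1\cup G_2)$ is meant).
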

\begin{proof}
Suppose the intersection of $Z_1$ and $Z_2$ is non-empty. This is equivalent to the existence of $z_1 \in Z_1$ and $z_2 \in Z_2$ such that $z_1 = z_2$. Furthermore, by definition of a zonotope (Defn. \ref{defn:zonotope}), this is equivalent to the existence of coefficients $\beta_1$ and $\beta_2$ such that $z_1 = c_1 + G_1 \beta_1$, $z_2 = c_2 + G_2 \beta_2$, and
\begin{align}
    &c_1 + G_1 \beta_1 = c_2 + G_2 \beta_2 \\
    \iff& c_1 = c_2 + G_2 \beta_2 - G_1 \beta_1 \\
    \iff& c_1 \in (c_1, G_1 \cup G_2) = Z_{2, \text{buf}}.
\end{align}
Thus we have shown that $Z_1 \cap Z_2 \neq \emptyset$ if and only if $c_1 \in (c_1, G_1 \cup G_2) = Z_{2, \text{buf}}$.
\end{proof}
\begin{figure}[t]
    \centering
    \includegraphics[width=0.8\columnwidth]{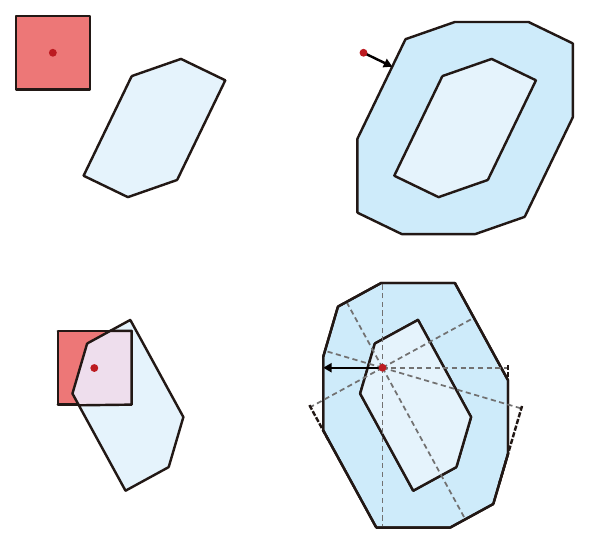}
    \caption{Illustration of signed distance for convex polytopes. In particular, this works well with zonotopes.} 
    \label{fig:zono_sdf}
    \vspace*{-0.2cm}
\end{figure}
Lemma \ref{lem:zono_intersection} can then be used to compute the the positive distance between two non-intersecting zonotopes:
\begin{lem}\label{lem:zono_positive_distance}
Let $Z_1 = (c_1, G_1)$ and $Z_2 = (c_2, G_2)$ be non-intersecting zonotopes.
\begin{align}
    \dist(Z_1, Z_2) = \dist(c_1; Z_{2,buf})
\end{align}
where $Z_{2,\text{buf}} = (c_2, G_1 \cup G_2)$.
\end{lem}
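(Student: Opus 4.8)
The plan is to reduce the set-to-set distance on the left to the point-to-set distance on the right by a change of variables on the zonotope coefficients, using the elementary fact that flipping the sign of a generator column (together with its coefficient) leaves a zonotope unchanged. The disjointness hypothesis, routed through Lemma \ref{lem:zono_intersection}, is what lets us pass freely between "distance to the boundary" and "distance to the set" in the definitions of $\dist(\cdot\,;\cdot)$ and $\dist(\cdot,\cdot)$.

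First I would remove the boundary qualifiers from both sides. Since $Z_1$ and $Z_2$ are compact, convex, and disjoint, the minimizing pair in $\dist(Z_1,Z_2)=\min_{x\in\partial Z_1,\,y\in\partial Z_2}\norm{x-y}$ may be taken over the full sets: were an optimal $x^\star$ interior to $Z_1$, perturbing it toward $y^\star$ would strictly decrease the distance, a contradiction; similarly for $y^\star$. Hence $\dist(Z_1,Z_2)=\min_{z_1\in Z_1,\,z_2\in Z_2}\norm{z_1-z_2}$. Likewise, Lemma \ref{lem:zono_intersection} together with $Z_1\cap Z_2=\emptyset$ gives $c_1\notin Z_{2,\mathrm{buf}}$, so the same projection argument yields $\dist(c_1;Z_{2,\mathrm{buf}})=\min_{y\in Z_{2,\mathrm{buf}}}\norm{c_1-y}$.

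Next I would expand both minimizations in coefficient form. Writing $z_1=c_1+G_1\beta_1$ and $z_2=c_2+G_2\beta_2$ with $\norm{\beta_1}_\infty\le 1$ and $\norm{\beta_2}_\infty\le 1$, the left-hand side is the minimum of $\norm{(c_1-c_2)+G_1\beta_1-G_2\beta_2}$ over the product box. Collecting the coefficients into $\beta:=(\beta_1,\beta_2)$ and noting that simultaneously negating a generator column and its coefficient leaves the unit $\infty$-norm box invariant, the matrix $[\,G_1\mid -G_2\,]$ acting on $\beta$ can be replaced by one with column set $G_1\cup G_2$; a further application of the same symmetry to the whole generator block and a re-centering by $c_1$ turn the expression into $\min_{y\in Z_{2,\mathrm{buf}}}\norm{c_1-y}$, which by the first step is $\dist(c_1;Z_{2,\mathrm{buf}})$.

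The only real obstacle is bookkeeping: I want to carry out the "sign flip leaves a zonotope invariant" and "the concatenated coefficient vector still fills the unit box" steps explicitly rather than invoking a Minkowski-difference calculus the paper has not formally set up, and I must make sure the boundary-versus-set subtlety in the two notions of $\dist$ is discharged cleanly by disjointness. An alternative, terser presentation would replace the middle two paragraphs by the identities $\dist(Z_1,Z_2)=\dist(0;\,Z_1\oplus(-Z_2))$ and $Z_1\oplus(-Z_2)=(c_1-c_2,\ G_1\cup G_2)$ followed by a translation by $c_1$; in the final draft I would keep whichever version is lightest notationally.
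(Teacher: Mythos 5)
Your proof is correct and takes essentially the same route as the paper's: expand both distances in zonotope-coefficient form and use the sign symmetry of the unit coefficient box to absorb $G_1$ into the buffered zonotope $Z_{2,\text{buf}}$. You are in fact somewhat more careful than the paper, which silently identifies distance-to-boundary with distance-to-set for disjoint convex bodies and packages the same change of variables as a pair of opposing inequalities rather than a single chain of equalities.
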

\begin{proof}
Suppose $Z_1$ and $Z_2$ are non-intersecting zonotopes such that $\dist(Z_1, Z_2) = r$ for some $r \in \mathbb{R}^{+}$. That is, there exists $z_1 \in Z_1$ and $z_2 \in Z_2$ such that 
\begin{align}
    r &= \|z_1 - z_2\| \\
    &= \|c_1 + G_1 \beta_1 - (c_2 + G_2 \beta_2) \| \\
    &= \| c_1 - (c_2 + G_2 \beta_2 = G_1 \beta_1) \| \\
    &\geq \dist(c_1, (c_2, G_1 \cup G_2)) \\
    &= \dist(c_1, Z_{2, \text{buf}}).
\end{align}
Now suppose $\dist(c_1, Z_{2, \text{buf}}) = r$. Then there exists coefficients $\beta_1$ and $beta_2$ such that
\begin{align}
    r &= \|c_1 - (c_2 + G_1 \beta_1 + G_2 \beta_2) \| \\
    &= \|c_1 - G_1 \beta_1 - (c_2 + G_2 \beta_2) \| \\
    &= \|z_1 - z_2\| \\
    &\geq \dist(Z_1, Z_2).
\end{align}
Together, these inequalities prove that $\dist(Z_1, Z_2) = \dist(c_1; Z_{2,buf})$.
\end{proof}
Let $\text{int}(S)$ denote the interior of a set.
Before deriving the negative distance we first define the penetration depth  \cite{cameron1986mtd} of two zonotopes:
\begin{defn}\label{defn:zono_pentration_depth}
    The \defemph{penetration depth} of $Z_1$ and $Z_2$ in $\R^{\nd}$ is denoted $\pi(Z_1, Z_2)$ and is defined by 
    \begin{equation}
        \pdf(Z_1, Z_2) = \min \{\|t\| :\ \text{int}(Z_1 + t \cap Z_2) = \emptyset, \quad t \in \R^{\nd}\}.
    \end{equation}
\end{defn}

The penetration depth $\pi(Z_1,Z_2)$ can be interpreted as the minimum translation applied to $Z_1$ such that the interior of $Z_1 \cap Z_2$ is empty.
Following \cite{agarwal2000convex} we can now redefine the penetration distance between $Z_1 = (c_1, G_1)$ and $Z_2 = (c_2, G_2)$ in terms of $c_1$ and $Z_{2,\text{buf}}$.
\begin{lem}\label{lem:zono_negative_distance}
Let $Z_1 = (c_1, G_1)$ and $Z_2 = (c_2, G_2)$ be zonotopes such that $Z_1 \cap Z_2 \neq \emptyset$.
Then the penetration depth between $Z_1$ and $Z_2$ is given by
   \begin{align}
     \pdf(Z_1, Z_2) = \pi(c_1, Z_{2,\text{buf}}),
\end{align} 
\end{lem}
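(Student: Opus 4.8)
The plan is to reprise the structure of Lemma~\ref{lem:zono_positive_distance}, but carried out at the level of \emph{interiors} rather than of the zonotopes themselves, so that the penetration depth of the pair $(Z_1,Z_2)$ collapses onto the penetration depth of the single point $c_1$ against the buffered zonotope $Z_{2,\text{buf}}=(c_2,G_1\cup G_2)$. The engine is again Lemma~\ref{lem:zono_intersection}, this time applied to translated copies of $Z_1$: for any $t\in\R^{\nd}$ the translate $Z_1+t$ is the zonotope $(c_1+t,G_1)$ with the same generator matrix, so Lemma~\ref{lem:zono_intersection} (with $Z_1+t$ in the role of $Z_1$) gives that $(Z_1+t)\cap Z_2\neq\emptyset$ if and only if $c_1+t\in(c_2,G_1\cup G_2)=Z_{2,\text{buf}}$. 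This already identifies the translations $t$ that keep $Z_1+t$ touching $Z_2$, up to the distinction between a set and its interior.

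Next I would upgrade this equivalence to one about interiors, namely $\text{int}\bigl((Z_1+t)\cap Z_2\bigr)\neq\emptyset$ if and only if $c_1+t\in\text{int}(Z_{2,\text{buf}})$. The left-hand side equals $\text{int}(Z_1+t)\cap\text{int}(Z_2)\neq\emptyset$, because any open set contained in $(Z_1+t)\cap Z_2$ lies in the interior of each factor, while conversely $\text{int}(Z_1+t)\cap\text{int}(Z_2)$ is itself an open subset of the intersection. Repeating the Minkowski-sum bookkeeping of the previous step at the level of interiors, and using the standard identity $\text{int}(A\oplus B)=\text{int}(A)\oplus\text{int}(B)$ for compact convex bodies together with $\text{int}(-Z_1)=-\text{int}(Z_1)$, turns $\text{int}(Z_1+t)\cap\text{int}(Z_2)\neq\emptyset$ into $c_1+t\in\text{int}(Z_{2,\text{buf}})$. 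For zonotopes that fail to be full-dimensional one would replace interiors by relative interiors; in the intended application the link forward-occupancy reach sets are full-dimensional, so this subtlety does not arise.

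Finally I would negate: $\text{int}\bigl((Z_1+t)\cap Z_2\bigr)=\emptyset$ if and only if $c_1+t\notin\text{int}(Z_{2,\text{buf}})$. Substituting this equivalence into Definition~\ref{defn:zono_pentration_depth} shows that the feasible set of the minimization defining $\pdf(Z_1,Z_2)$ is exactly $\{t\in\R^{\nd}: c_1+t\notin\text{int}(Z_{2,\text{buf}})\}$, which is precisely the feasible set defining $\pi(c_1,Z_{2,\text{buf}})$, i.e.\ the least translation that pushes the point $c_1$ out of $Z_{2,\text{buf}}$. Since $c_1\in Z_{2,\text{buf}}$ by Lemma~\ref{lem:zono_intersection} (because $Z_1\cap Z_2\neq\emptyset$), this quantity also equals $\dist(c_1,\partial Z_{2,\text{buf}})$. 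The two minima therefore coincide, giving $\pdf(Z_1,Z_2)=\pi(c_1,Z_{2,\text{buf}})$.

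The step I expect to be the genuine obstacle is the passage to interiors: making precise that the $\text{int}(\cdot)$ appearing in the penetration-depth definition commutes both with the intersection $(Z_1+t)\cap Z_2$ and with the Minkowski-sum reduction, and in particular checking that the argument does not quietly fail when one of the zonotopes is lower-dimensional, which is why I would either invoke relative interiors or restrict to full-dimensional zonotopes. Everything else is bookkeeping layered on top of Lemma~\ref{lem:zono_intersection}.
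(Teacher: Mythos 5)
Your proof is correct (modulo the full-dimensionality caveat you flag) and reaches the result by a different mechanism than the paper. The paper argues by two inequalities: for each direction it takes the optimal translation, extracts boundary witness points $z_1 \in \partial Z_1$, $z_2 \in \partial Z_2$ with $z_1 + t = z_2$, and rearranges the zonotope parameterizations to pass between $\partial Z_{2,\text{buf}}$ and the touching configuration of $Z_1 + t$ and $Z_2$. You instead push Lemma \ref{lem:zono_intersection} through to interiors and show that the two minimizations in Definition \ref{defn:zono_pentration_depth} have literally the same feasible set, $\{t : c_1 + t \notin \text{int}(Z_{2,\text{buf}})\}$, so the minima coincide with no inequality chasing. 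What your route buys: it sidesteps the paper's implicit (and not fully justified) step that a point of the form $c_2 + G_2\beta_2 - G_1\beta_1$ built from boundary points lies on $\partial Z_{2,\text{buf}}$ --- the algebra gives membership in the buffered zonotope, not in its boundary. What it costs: you need $\text{int}(A \oplus B) = \text{int}(A) \oplus \text{int}(B)$, which holds for full-dimensional convex bodies but requires relative interiors otherwise; you correctly flag this, and it is in fact the paper's Definition \ref{defn:zono_pentration_depth} that degenerates for lower-dimensional $Z_1$ (every $t$, including $t = 0$, is then feasible). Both proofs share the same reading of $\pi(c_1, Z_{2,\text{buf}})$ as $\dist(c_1, \partial Z_{2,\text{buf}})$ for $c_1$ inside the buffered set, which is not literally what the definition yields for a singleton first argument; that is a defect of the paper's notation rather than of your argument.
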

\begin{proof}
Suppose $Z_1 \cap Z_2 \neq \emptyset$. 
Then by Defn. \ref{defn:zono_pentration_depth}, there exists a translation vector $t \in \R^{\nd}$ such that $\text{int}(Z_1 + t \cap Z_2) = \emptyset$. 
This means that $Z_1$ is translated by $t$ such that only the boundaries of $Z_1 + t$ and $Z_2$ are intersecting. 
Thus, there exists $z_1 \in \partial Z_1$ and $z_2 \in \partial Z_2$ such that $z_1 + t = z_2$. 
Furthermore, there exist coefficients $\beta_1$ and $\beta_2$ such that $z_1 = c_1 + G_1 \beta_1$, $z_2 = c_2 + G_2 \beta_2$, and
\begin{align}
    & c_1 + G_1 \beta_1 + t = c_2 + G_2 \beta_2 \\
    \iff& c_1 + t = c_2 + G_2 \beta_2 - G_1 \beta_1 \in \partial Z_{2,\text{buf}}.
\end{align}
Since $\pi(c_1, Z_{2,\text{buf}})$ is the minimum distance between $c_1$ and all points in $\partial Z_{2,\text{buf}}$, this means that $\pi(c_1, Z_{2,\text{buf}}) \leq \|t\|$.

Now suppose $c_1 \in Z_{2, \text{buf}}$.
Let $t \in \R^{\nd}$ such that $\|t\|$ is the penetration depth of $c_1$ and $Z_{2,\text{buf}}$. 
Then there exists coefficients $\beta_1$ and $\beta_2$ such that $z_2 = c_2 + G_1 \beta_1 + G_2 \beta_2$ and
\begin{align}
    c_1 + t = c_2 + G_1 \beta_1 + G_2 \beta_2 \\
    c_1 -G_1 \beta_1 + t = c_2 + G_2 \beta_2.
\end{align}
This means translating $z_1 = c_1 - G_1 \beta_1 \in \partial Z_1$ by $t$ causes the boundary of $Z_1$ to intersect the boundary of $Z_2$. 
Therefore $\pi(Z_1, Z_2) \leq \|t\|$.
Together the above inequalities prove the result.
\end{proof}

\begin{thm}\label{thm:zono_sdf}
Let $Z_1 = (c_1, G_1)$ and $Z_2 = (c_2, G_2)$ be zonotopes such that $Z_1 \cap Z_2 \neq \emptyset$. 
Then the signed distance between $Z_1$ and $Z_2$ is given by
\begin{equation}
    \sdf(Z_1, Z_2)  = 
        \begin{cases}
          \dist(c_1; Z_{2,buf})  & \text{if } c_1 \not \in Z_{2,buf} \\
          -\pdf(c_1, Z_{2,\text{buf}}) & \text{if } c_1 \in Z_{2,buf}
        \end{cases} .
\end{equation}
\end{thm}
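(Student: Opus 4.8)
The plan is to derive Theorem~\ref{thm:zono_sdf} as an immediate consequence of the three preceding lemmas together with the definition of the signed distance between two sets, splitting on whether $c_1$ lies in the buffered zonotope $Z_{2,\text{buf}} = (c_2, G_1 \cup G_2)$. First I would record the equivalence supplied by Lemma~\ref{lem:zono_intersection}: $Z_1 \cap Z_2 \neq \emptyset$ if and only if $c_1 \in Z_{2,\text{buf}}$. Hence the branch $c_1 \notin Z_{2,\text{buf}}$ is \emph{exactly} the disjoint case $Z_1 \cap Z_2 = \emptyset$, and the branch $c_1 \in Z_{2,\text{buf}}$ is exactly the overlapping case; this is what licenses phrasing the case split in the statement in terms of the single point $c_1$ rather than a full intersection test.

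In the first branch, $Z_1$ and $Z_2$ are disjoint, so by the definition of the signed distance between two sets $\sdf(Z_1,Z_2) = \dist(Z_1,Z_2)$, and Lemma~\ref{lem:zono_positive_distance} identifies this with $\dist(c_1; Z_{2,\text{buf}})$, which is the claimed expression (and, since $c_1 \notin Z_{2,\text{buf}}$, equals the point-to-boundary distance $\dist(c_1,\partial Z_{2,\text{buf}})$ used in Alg.~\ref{alg:rdf}). In the second branch, $Z_1$ and $Z_2$ overlap, so the signed distance is negative; here I would argue that its magnitude is the penetration depth $\pdf(Z_1,Z_2)$ of Definition~\ref{defn:zono_pentration_depth}, i.e. the minimal translation separating the interiors, rather than the (possibly vanishing) boundary-to-boundary distance, since that is the notion of ``depth of intersection'' the signed distance is meant to capture. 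Lemma~\ref{lem:zono_negative_distance} then rewrites $\pdf(Z_1,Z_2)$ as $\pdf(c_1, Z_{2,\text{buf}})$, giving $\sdf(Z_1,Z_2) = -\pdf(c_1, Z_{2,\text{buf}})$. Assembling the two branches yields the theorem.

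The substantive work has already been done in Lemmas~\ref{lem:zono_positive_distance} and especially~\ref{lem:zono_negative_distance}, so I expect the only delicate point to be bookkeeping in the overlapping case: making precise that ``$-\dist(\Omega_1,\Omega_2)$'' in the signed-distance definition is to be read as $-\pdf(Z_1,Z_2)$ for full-dimensional zonotopes (equivalently, that for a point interior to a convex body the distance to the boundary coincides with its penetration depth), and checking that the degenerate zonotope with center $c_1$ and no generators is an admissible argument to $\pdf(\cdot,\cdot)$ in Definition~\ref{defn:zono_pentration_depth}. I would also remark that, strictly under the stated hypothesis $Z_1 \cap Z_2 \neq \emptyset$, only the second branch is non-vacuous, the first being retained so that the formula reads uniformly for all configurations.
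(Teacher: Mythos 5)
Your proposal is correct and follows essentially the same route as the paper, whose own proof is just the one-line observation that the result ``follows readily'' from Lemmas~\ref{lem:zono_positive_distance} and~\ref{lem:zono_negative_distance}; you simply fill in the case split via Lemma~\ref{lem:zono_intersection} and the assembly the paper leaves implicit. Your two caveats are well taken and in fact sharper than the paper itself: under the stated hypothesis $Z_1 \cap Z_2 \neq \emptyset$ the first branch is indeed vacuous (the hypothesis appears to be a slip), and the overlapping branch does require reading the ``$-\dist(\Omega_1,\Omega_2)$'' of the signed-distance definition as the negative penetration depth rather than the (zero) boundary-to-boundary distance, which is the interpretation Lemma~\ref{lem:zono_negative_distance} supports.
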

\begin{proof} The proof follows readily from Lemmas \ref{lem:zono_positive_distance} and \ref{lem:zono_negative_distance}.
\end{proof}

Note that we provide a visual illustration of this theorem in Fig. \ref{fig:zono_sdf}.

\subsection{The Proof}

We prove the desired result by considering two cases. 
First assume  $O \cap \FO(q(T;k)) = \emptyset$ for all $t \in T$. 
Then
\begin{align}
    \rdf(O, \FO(q(T;k))) &= \underset{t \in T}{\min}\: \underset{j \in \Nq}{\min}\:  \sdf(O, \FO_j(q(t;k))) \\
    &\geq \underset{i \in \Nt}{\min}\: \underset{j \in \Nq}{\min}\:  \sdf(O, \pzFOjki) \\
    &= \underset{i \in \Nt}{\min}\: \underset{j \in \Nq}{\min}\: \sdf(\co, \pzFOjkibuf) \\
    &= \underset{j \in \Nq}{\min}\: \sdf(\co, \bigcup_{i\in\Nt} \pzFOjkibuf) \\
    &\geq \underset{j \in \Nq}{\min}\: \sdf(\co, \Pj) \\
    &= \underset{j \in \Nq}{\min}\: \ardf_j(\co, \Pj) \\
    &= \ardf(\co, \bigcup_{j\in \Nq} \Pj),
\end{align}
where the first equality follows from the definition of $\rdf$ in Def. \ref{def:rdf_point}, the second inequality follows from Lemma \ref{lem:pz_FO}, the third equality follows from Lemma \ref{lem:zono_positive_distance}, the fourth inequality by flipping the order of minimization, and the fifth inequality follows from Def. \ref{def:convhull}. 

Thus $\ardf$ underapproximates the distance between the forward occupancy and the obstacle. 
If instead $O \cap \FO_j(\q) \neq \emptyset$ for some $t \in T$, replacing $\geq$ with $\leq$ shows that we can overapproximate the penetration distance between an obstacle and the forward occupancy.


\end{document}